\theoremstyle{plain}
\newtheorem{theorem}{Theorem}[section]
\newtheorem{fact}[theorem]{Fact}
\theoremstyle{definition}
\newtheorem{definition}[theorem]{Definition}
\theoremstyle{remark}
\newtheorem{remark}[theorem]{Remark}
\newtheorem{example}[theorem]{Example}
\def\mcl#1{\mathcal{#1}}
\def\blacket#1{\left\langle #1\right\rangle}
\def\opn{\operatorname}
\def\alg{\mcl{A}}
\def\modu{\mcl{M}}
\def\red#1{\textcolor{black}{#1}}
\def\mb#1{\mathbf{#1}}
\def\bs#1{\boldsymbol{#1}}
\icmltitlerunning{C*-algebra Net}
\begin{document}

\twocolumn[
\icmltitle{$C^*$-algebra Net: A New Approach Generalizing \\ Neural Network Parameters to $C^*$-algebra}

% It is OKAY to include author information, even for blind
% submissions: the style file will automatically remove it for you
% unless you've provided the [accepted] option to the icml2022
% package.

% List of affiliations: The first argument should be a (short)
% identifier you will use later to specify author affiliations
% Academic affiliations should list Department, University, City, Region, Country
% Industry affiliations should list Company, City, Region, Country

% You can specify symbols, otherwise they are numbered in order.
% Ideally, you should not use this facility. Affiliations will be numbered
% in order of appearance and this is the preferred way.
\icmlsetsymbol{equal}{*}

\begin{icmlauthorlist}
\icmlauthor{Yuka Hashimoto}{ntt}
\icmlauthor{Zhao Wang}{ntt,wsd}
\icmlauthor{Tomoko Matsui}{ims}
%\icmlauthor{Firstname4 Lastname4}{sch}
%\icmlauthor{Firstname5 Lastname5}{yyy}
%\icmlauthor{Firstname6 Lastname6}{sch,yyy,comp}
%\icmlauthor{Firstname7 Lastname7}{comp}
%\icmlauthor{}{sch}
%\icmlauthor{Firstname8 Lastname8}{sch}
%\icmlauthor{Firstname8 Lastname8}{yyy,comp}
%\icmlauthor{}{sch}
%\icmlauthor{}{sch}
\end{icmlauthorlist}

\icmlaffiliation{ntt}{NTT Network Service Systems Laboratories, NTT Corporation, Tokyo, Japan}
\icmlaffiliation{wsd}{Institute for Disaster Response Robotics, Waseda University, Tokyo, Japan}
\icmlaffiliation{ims}{Department of Statistical Modeling, the Institute of Statistical Mathematics, Tokyo, Japan}

\icmlcorrespondingauthor{Yuka Hashimoto}{yuka.hashimoto.rw@hco.ntt.co.jp}
%\icmlcorrespondingauthor{Firstname2 Lastname2}{first2.last2@www.uk}

% You may provide any keywords that you
% find helpful for describing your paper; these are used to populate
% the "keywords" metadata in the PDF but will not be shown in the document
\icmlkeywords{$C^*$-algebra, Hilbert $C^*$-module, Density estimation, Few-shot learning}

\vskip 0.3in
]

% this must go after the closing bracket ] following \twocolumn[ ...

% This command actually creates the footnote in the first column
% listing the affiliations and the copyright notice.
% The command takes one argument, which is text to display at the start of the footnote.
% The \icmlEqualContribution command is standard text for equal contribution.
% Remove it (just {}) if you do not need this facility.

\printAffiliationsAndNotice{}  % leave blank if no need to mention equal contribution
%\printAffiliationsAndNotice{\icmlEqualContribution} % otherwise use the standard text.

\begin{abstract}
%This document provides a basic paper template and submission guidelines.
%Abstracts must be a single paragraph, ideally between 4--6 sentences long.
%Gross violations will trigger corrections at the camera-ready phase.
We propose a new framework that generalizes the parameters of neural network models to $C^*$-algebra-valued ones. $C^*$-algebra is a generalization of the space of complex numbers. A typical example is the space of continuous functions on a compact space. This generalization enables us to combine multiple models continuously and use tools for functions such as regression and integration. Consequently, we can learn features of data efficiently and adapt the models to problems continuously. We apply our framework to practical problems such as density estimation and few-shot learning and show that our framework enables us to learn features of data even with a limited number of samples. Our new framework highlights the potential possibility of applying the theory of $C^*$-algebra to general neural network models.
\end{abstract}

\section{Introduction}\label{sec:intro}
%(neural network has been a common tool for various tasks, e.g., classification, regression, data generation, density estimation, ... )
Continuation of neural network models has been discussed and successfully applied to practical problems and theoretical analyses.
\citet{chen18} proposed to regard the transformation of input variables as a continuous dynamical system, which is called the neural ODE.
While the neural ODE focus on the continuation of layers (vertical direction), continuation of units (horizontal direction) is also discussed.  
By regarding the action of a weight matrix to variables as integrations and using an integral representation of the model, we can use the theory of harmonic analysis to analyze the model theoretically~\cite{candes99,sonoda17,sonoda21}.
In these frameworks, we regard the parameters of a classical model being obtained by a discretization of functions, which enables us to use tools for functions such as derivative and integration for practical applications and theoretical analyses of the model.  
 
In this paper, we propose a {brand} new framework that generalizes the parameters (weights) of models. %to $C^*$-algebra-valued ones.
Unlike previous works regarding the continuation of neural network models, we do not consider the continuation of the parameters of {\em a single} classical model to functions.
Instead, we generalize each $\mathbb{R}$-valued parameter to a $C^*$-algebra-valued one, which corresponds to an aggregation of {\em multiple} classical models and a continuation of the parameters of the multiple models to functions.
$C^*$-algebra is a generalization of the space of complex numbers.
Typical examples are the space of continuous functions on a compact space, the space of $L^{\infty}$ functions on a $\sigma$-finite measure space, and the space of bounded linear operators on a Hilbert space.
In this paper, we focus on the $C^*$-algebra of the space of continuous functions on a compact space.
Practically, we can use tools for functions such as regression and integration to learn multiple models efficiently and to adapt the models to problems continuously.
%control properties of parameters as functions while learning the model.
Theoretically, our new framework highlights the potential possibility of applying the theory of $C^*$-algebra to general neural network models.
%shed light on the connection between neural network and $C^*$-algebra.
Fig.~\ref{fig:overview} shows an overview of our framework schematically.
The application of $C^*$-algebra to data analysis is discussed by~\citet{hashimoto21}.
We focus on the practical applications to neural network in this paper.
{To the best of our knowledge, this is the first paper that applies the theory of $C^*$-algebra to neural network models.}
%We focus on the $C^*$-algebra of $L^{\infty}$ functions on a $\sigma$-finite measure space in this paper, since it is a commutative von Neumann algebra, which has many desired properties for data analysis.

%We first generalize the formulation of the classical model to that with $C^*$-algebra, and propose an $\alg$-valued gradient descent method to learn the model.
%Since $C^*$-algebra admits a generalization of Hilbert space, which is called Hilbert $C^*$-module, we can generalize the classical gradient descent method on the Hilbert space associated with the parameters to that on the Hilbert $C^*$-module associated with the $C^*$-algebra-valued parameters.
%Then, we show examples of practical applications of our framework, density estimation and few-shot learning.
{Our contributions are as follows:\vspace{.2cm}\\
$\bullet$ We propose a generic framework of neural network with $C^*$-algebra-valued parameters.\\
$\bullet$ We propose a gradient descent method to learn the model on $C^*$-algebra. \\
$\bullet$ We apply our framework to practical problems such as density estimation and few-shot learning.\vspace{.2cm}\\}
Regarding the second one, since $C^*$-algebra admits a generalization of Hilbert space, which is called Hilbert $C^*$-module, we can generalize the classical gradient descent method on the Hilbert space associated with the parameters of a model to that on the Hilbert $C^*$-module associated with the $C^*$-algebra-valued parameters.

The remainder of this paper is organized as follows.
In Section~\ref{sec:background}, we review mathematical notions related to $C^*$-algebra and Hilbert $C^*$-module.
In Section~\ref{sec:cstar_net}, we propose a model with $C^*$-algebra-valued parameters.
Then, in Section~\ref{sec:application}, we discuss practical applications and show numerical results.
We conclude the paper in Section~\ref{sec:conclusion}. 
The source code of this paper is available at \url{https://www.rd.ntt/e/ns/qos/person/hashimoto/code_c_star_net.zip}.

%\color{red}
\paragraph{Notations}
Bold letters denote $\mathbb{R}$-valued objects or maps from $\mathbb{R}^{d_1}$ to $\mathbb{R}^{d_2}$ for some $d_1,d_2\in\mathbb{N}$.
Italic letters denote $\alg$-valued objects or maps from $\alg^{d_1}$ to $\alg^{d_2}$.
\color{black}
\begin{figure}[t]
\centering
 \includegraphics[scale=0.25]{./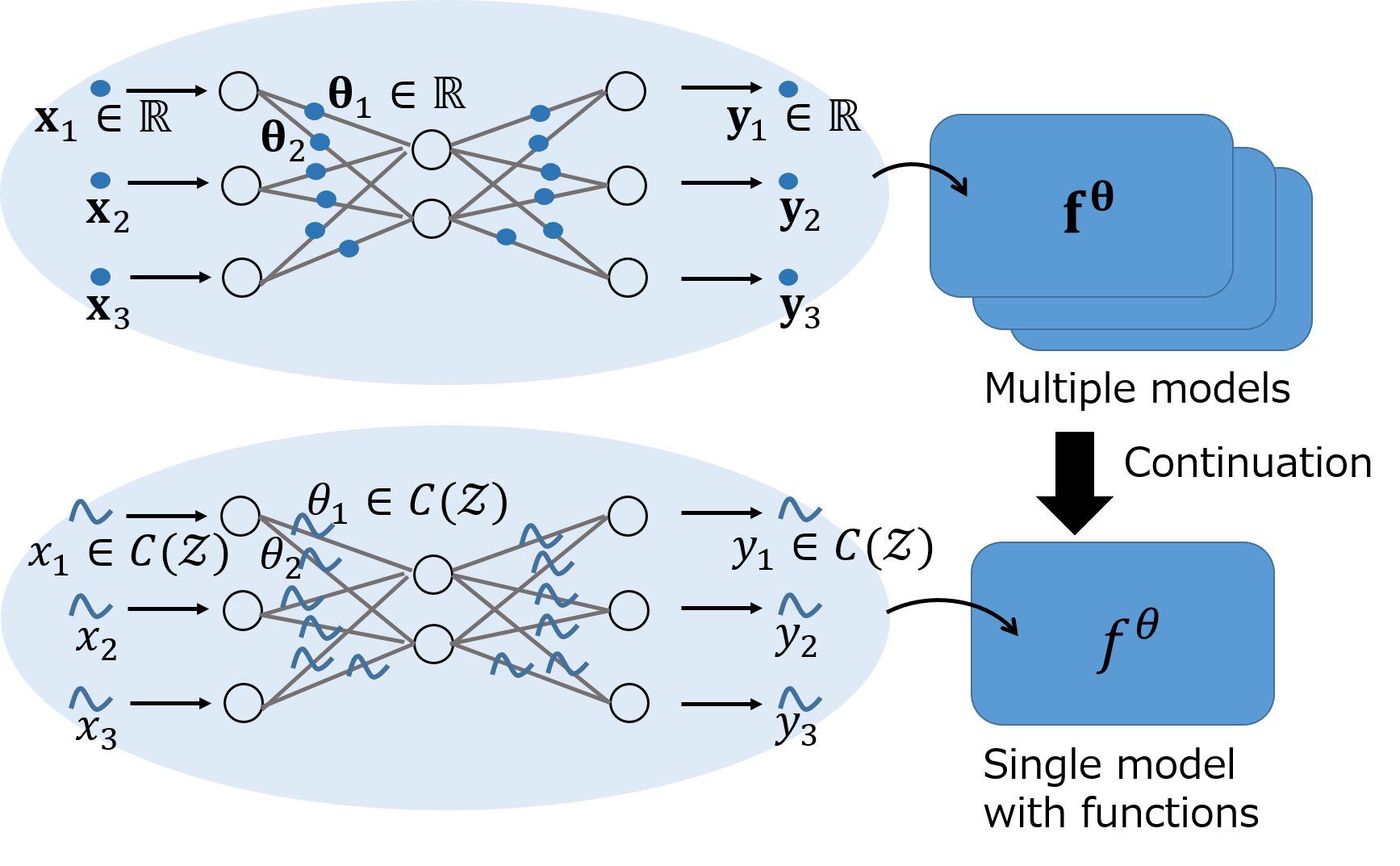}
\caption{Overview of our framework.}\label{fig:overview}
\end{figure}

%\color{red}
\section{Motivation of applying $C^*$-algebra to neural networks}
Because $C^*$-algebra and Hilbert $C^*$-module are natural generalizations of the space of complex numbers and Hilbert space, we can naturally generalize neural networks by using $C^*$-algebra.
Let $\alg$ be the $C^*$-algebra of the space of continuous functions on a compact space.
Then, by generalizing the real-valued parameters in the neural networks to $C^*$-algebra-valued ones, we can combine multiple real-valued models continuously.
For example, we can apply our framework to ensemble learning.
In standard ensemble learning, we learn multiple models separately and aggregate the results of the models after finishing the learning process.
This approach may not be efficient since the multiple models do not interact during the learning process, although each model is learned for the same or related task.
On the other hand, by using the $\alg$-valued parameters, we introduce a continuous dependence between different models and can learn the multiple models with interactions.
As a result, we expect that our method outperforms the standard ensemble learning.
For further details of applications of our framework to practical situations, see Section~\ref{sec:application}.
\color{black}

\section{Background}\label{sec:background}
In this section, we review mathematical notions required for the remaining part of this paper.
In Subsection~\ref{subsec:c_algebra}, we review $C^*$-algebra, and in Subsection~\ref{subsec:c_module}, we review Hilbert $C^*$-module.
\red{All the definitions and examples in this section are standard terminologies in $C^*$-algebra, and they are adopted from~\citet{hashimoto21}.}
\red{The standard definitions related to the definitions in this section is provided in Section~\ref{ap:definition}.}
For further details of $C^*$-algebra and Hilbert $C^*$-module, see~\cite{lance95,murphy90,hashimoto21}.

\subsection{$C^*$-algebra}\label{subsec:c_algebra}
$C^*$-algebra is a generalization of the space of complex numbers.
\begin{definition}[$C^*$-algebra]~\label{def:c*_algebra}
A set $\mcl{A}$ is called a {\em $C^*$-algebra} if it satisfies the following conditions:

\begin{enumerate}
 \item $\mcl{A}$ is an algebra (See Definition~\ref{def:algebra}) over $\mathbb{C}$ and {equipped with} a bijection $(\cdot)^*:\mcl{A}\to\mcl{A}$ that satisfies the following conditions for $\alpha,\beta\in\mathbb{C}$ and $c,d\in\mcl{A}$:

% \leftskip=10pt
 $\bullet$ $(\alpha c+\beta d)^*=\overline{\alpha}c^*+\overline{\beta}d^*$,\\%\qquad
 $\bullet$ $(cd)^*=d^*c^*$,\qquad
 $\bullet$ $(c^*)^*=c$.

 \leftskip=0pt
 \item $\mcl{A}$ is a normed space with $\Vert\cdot\Vert$, and for $c,d\in\mcl{A}$, $\Vert cd\Vert\le\Vert c\Vert\,\Vert d\Vert$ holds.
 In addition, $\mcl{A}$ is complete with respect to $\Vert\cdot\Vert$.

 \item For $c\in\mcl{A}$, $\Vert c^*c\Vert=\Vert c\Vert^2$ holds.
\end{enumerate}

We introduce important notions related to $C^*$-algebra.
\end{definition} 
\begin{definition}[Multiplicative identity]\label{def:multiplicative_identity}
The {\em multiplicative identity} of a $C^*$-algebra $\alg$ is the element $a\in\alg$ that satisfies $ac=ca=c$ for any $c\in\alg$.
We denote by $1_{\alg}$ the multiplicative identity of $\alg$.
%If a $C^*$-algebra $\alg$ has the multiplicative identity, then it is called a {\em unital $C^*$-algebra}.
\end{definition}
\begin{definition}[Positive]~\label{def:positive}
An element $c$ of $\mcl{A}$ is called {\em positive} if there exists $d\in\mcl{A}$ such that $c=d^*d$ holds.
%For a positive element $c\in\mcl{A}$, we denote $c\ge 0$.
%\textcolor{black}{For a unital $C^*$-algebra $\alg$}, if a positive element $c\in\alg$ is invertible, i.e., there exists $d\in\alg$ such that $cd=dc=1_{\alg}$, then $c$ is called {\em strictly positive}.
%For $c,d\in\alg$, we define a partial order $\le$, where $c\le d$ means $d-c$ is positive. %and $c<_{\alg}d$ if $d-c$ is strictly positive.
We denote by $\alg_+$ the subset of $\alg$ composed of all positive elements in $\alg$.
\end{definition}

An important example of $C^*$-algebras is the space of continuous functions on a compact space, \red{on which we focus in this paper.}
\begin{example}\label{ex:continuous}
Let $\mcl{Z}$ be a compact space and let $C(\mcl{Z})$ be the Banach space of continuous functions equipped with the sup norm.
Let $\cdot:C(\mcl{Z})\times C(\mcl{Z})\to C(\mcl{Z})$ be defined as $(c\cdot d)(z) = c(z)d(z)$ for $c,d\in C(\mcl{Z})$ and $z\in\mcl{Z}$.
In addition, let $(\cdot)^*:C(\mcl{Z})\to C(\mcl{Z})$ be defined as $c^*(z)=\overline{c(z)}$ for $c\in C(\mcl{Z})$.
Then, $C(\mcl{Z})$ is a $C^*$-algebra.
The multiplicative identity is the constant function whose value is $1$ at any $z\in\mcl{Z}$.
For $c\in C(\mcl{Z})$, $c$ is positive if and only if $c(z)\ge 0$ for any $z\in\mcl{Z}$.
\end{example}

\subsection{Hilbert $C^*$-module}\label{subsec:c_module}
Hilbert $C^*$-module is a generalization of Hilbert space.
We first introduce $C^*$-module, which is a generalization of vector space.
Then, we introduce $\alg$-valued inner product and Hilbert $C^*$-module.
\begin{definition}[$C^*$-module]\label{def:c*module}
Let $\modu$ be an abelian group with operation $+$ and let $\alg$ be a $C^*$-algebra.
If $\modu$ {is equipped with} an $\alg$-multiplication (See Definition~\ref{def:multiplication}), $\modu$ is called a {\em $C^*$-module} over $\alg$.
\end{definition}
\begin{definition}[$\alg$-valued inner product]\label{def:innerproduct}
Let $\alg$ be a $C^*$-algebra and let $\modu$ be a $C^*$-module over $\alg$.
A {$\mathbb{C}$-linear map with respect to the second variable} $\blacket{\cdot,\cdot}:\modu\times\modu\to\alg$ is called an $\alg$-valued {\em inner product} if it satisfies the following conditions for $u,v,p\in\modu$ 
and $c,d\in\alg$:

%\begin{enumerate}
 $\bullet$ $\blacket{u,vc+pd}=\blacket{u,v}c+\blacket{u,p}d$,\quad
 $\bullet$ $\blacket{v,u}=\blacket{u,v}^*$,\\
 $\bullet$ $\blacket{u,u}$ is positive,\quad
 $\bullet$ If $\blacket{u,u}=0$ then $u=0$.
%\end{enumerate}
\end{definition}
\begin{definition}[Norm]\label{def:absolute_norm}
Let $\alg$ be a $C^*$-algebra and let $\modu$ be a $C^*$-module over $\alg$ equipped with an $\alg$-valued inner product $\blacket{\cdot,\cdot}$.
%For $u\in\modu$, the {\em $\alg$-valued absolute value} $\vert u\vert$ on $\modu$ is defined by the positive element $\vert u \vert$ of $\alg$ such that $\vert u\vert^2=\blacket{u,u}$.   
%The $\alg$-valued absolute value $\vert\cdot\vert$ defines 
The (real-valued) norm $\Vert \cdot\Vert$ on $\modu$ is defined by $\Vert u\Vert =\big\Vert\blacket{u,u}\big\Vert_\alg^{1/2}$, where $\Vert\cdot\Vert_{\alg}$ is the norm in $\alg$. 
%By definition,
%The norm $\Vert\cdot\Vert$ is regarded as an aspect of the $\alg$-valued absolute value $\vert\cdot\vert$.  We call $\modu$ equipped with $\blacket{\cdot,\cdot}$ a {\em Hilbert $C^*$-module} if $\modu$ is complete w.r.t.\@ the norm $\Vert\cdot\Vert$.
\end{definition}
\begin{definition}[Hilbert $C^*$-module]\label{def:hil_c*module}
Let $\alg$ be a $C^*$-module.
Let $\modu$ be a $C^*$-module over $\alg$ equipped with an $\alg$-valued inner product defined in Definition~\ref{def:innerproduct}.
If $\modu$ is complete with respect to the norm $\Vert \cdot\Vert$ defined in Definition~\ref{def:absolute_norm}, it is called a {\em Hilbert $C^*$-module} over $\alg$ or {\em Hilbert $\alg$-module}.
\end{definition}
%\begin{example}\label{ex:An}
%A simple example of Hilbert $C^*$ modules over a $C^*$-algebra $\alg$ is $\alg^N$ for $N\in\mathbb{N}$.
%The $\alg$-valued inner product of $\mathbf{c}=[c_1,\ldots,c_n]^T$ and $\mathbf{d}=[d_1,\ldots,d_n]^T$ is defined as $\blacket{\mathbf{c},\mathbf{d}}=\sum_{i=1}^nc_i^*d_i$.
%The absolute value and norm in $\alg^n$ are given as $\vert \mathbf{c}\vert^2=(\sum_{i=1}^nc_i^*c_i)$ and $\Vert \mathbf{c}\Vert=\Vert \sum_{i=1}^nc_i^*c_i\Vert^{1/2}$, respectively.
%\end{example}

\section{Neural network on $C^*$-algebra}\label{sec:cstar_net}
In this section, we propose a new framework with $C^*$-algebra that generalizes the existing framework of neural networks.
In Subsection~\ref{subsec:nn_hilbertsp}, we formulate the existing framework of neural networks.
Then, in Subsection~\ref{subsec:nn_hilbertmodu}, we generalize the existing framework to that on $C^*$-algebra.
\subsection{Existing framework of neural networks}\label{subsec:nn_hilbertsp}
We begin by formulating the existing framework of neural networks.
We consider a network with $H\in\mathbb{N}$ hidden layers.
Let $N_0,\ldots,{N_{H+1}}$ be natural numbers each of which represents the dimension of a layer.
%Let $x_1,\ldots,x_n\in\mathbb{R}^{N_0}$ be given inputs and $y_1,\ldots,y_n\in\mathbb{R}^{N_{H+1}}$ be outputs.
Inputs are vectors in $\mathbb{R}^{N_0}$ and outputs are vectors in $\mathbb{R}^{N_{H+1}}$.
In addition, for $i=1,\ldots,H+1$, let $\mb{W}_i:\mathbb{R}^{N_{i-1}}\to\mathbb{R}^{N_{i}}$ be an $N_{i-1}\times N_i$ matrix and $\bs{\sigma}_i:\mathbb{R}^{N_{i}}\to\mathbb{R}^{N_{i}}$ be an (often nonlinear) activation function.
The neural network model $\mb{f}:\mathbb{R}^{N_0}\to\mathbb{R}^{N_{H+1}}$ is defined as
\begin{equation}
 \mb{f}=\boldsymbol{\sigma}_{H+1}\circ \mb{W}_{H+1}\circ\bs{\sigma}_{H}\circ \mb{W}_H\circ\cdots\circ\bs{\sigma}_1 \circ \mb{W}_1.\label{eq:nn}
\end{equation}
We fix $\bs{\sigma}_1,\ldots,\bs{\sigma}_{H+1}$ and find best possible matrices $\mb{W}_1,\ldots,\mb{W}_{H+1}$ by minimizing a loss function $\mb{L}$.
We regard the set of matrices $\mb{W}_1,\ldots,\mb{W}_{H+1}$ as an $N$-dimensional vector of parameters, which is denoted as $\bs{\theta}$, where $N=\sum_{i=1}^{H+1}N_{i-1}N_i$.
Then, we set a loss function $\mb{L}:\mathbb{R}^N\to\mathbb{R}_{+}$, which depends on the parameters (and usually on inputs and outputs).
Here, $\mathbb{R}_+$ is the set of all non-negative real numbers.
We learn the parameter $\bs\theta$ by minimizing the loss function.
The minimization of the loss function is implemented by a gradient descent method such as stochastic gradient descent (SGD) and Adam~\cite{kingma15}.
We compute the gradient $\nabla_{\bs{\theta}}\mb{L}$ of the loss function $\mb{L}$ and generate a sequence $\bs{\theta}_0,\bs{\theta}_1,\ldots$ using the gradient for finding a best possible $\bs{\theta}$. %, where we expect $L(\theta_1)\ge L(\theta_2)\ge\cdots$.

\subsection{Generalization to $C^*$-algebra}\label{subsec:nn_hilbertmodu}
\subsubsection{Formulation}
To improve the representational power of the model, we generalize the parameter $\theta$ on $\mathbb{R}^N$, which is a Hilbert space, to a Hilbert $C^*$-module.
Let $\alg$ be a commutative unital $C^*$-algebra.
By the Gelfand--Naimark theorem, there exists a compact Hausdorff space $\mcl{Z}$ such that $\alg$ is isometrically $*$-isomorphic to the $C^*$-algebra $C(\mcl{Z})$, the space of continuous functions on $\mcl{Z}$ (see Example~\ref{ex:continuous}).
Therefore, we focus on the case of $\alg=C(\mcl{Z})$ for a compact Hausdorff space $\mcl{Z}$ in the remaining parts of this paper.
%Let $\mcl{Z}$ be a $\sigma$-finite measure space and let $\alg=L^{\infty}(\mcl{Z})$.
%Note that $\alg$ is a commutative von Neumann algebra (see ).
As before, let $N_0,\ldots,{N_{H+1}}$ be natural numbers each of which represents the dimension of a layer.
%Let $x_1,\ldots,x_n\in\alg^{N_0}$ be given inputs and $y_1,\ldots,y_n\in\alg^{N_{H+1}}$ be outputs.
However, in this case, we consider $\alg^{N_0}$-valued inputs and $\alg^{N_{H+1}}$-valued outputs.
Since $\alg$ is a function space, the inputs and outputs should be functions, which enables us to analyze functional data.
On the other hand, for scalar-valued data $\mb{x}\in\mathbb{R}$, we can transform $\mb{x}$ into an appropriate function such as the constant function ${x}\equiv \mb{x}$ (see Subsections~\ref{subsec:density_estimation} and \ref{subsec:few-shot} for practical applications).
%$x(z)=\tilde{x}e^{-\Vert z-z_0\Vert^2}$ for $z,z_0\in\mcl{Z}$ and $\mcl{Z}$ is the Euclidean space (see Section ** for practical applications).
In addition, for $i=1,\ldots,H+1$, let $W_i:\alg^{N_{i-1}}\to\alg^{N_{i}}$ be an $N_{i-1}\times N_i$ $\alg$-valued matrix and $\sigma_i:\alg^{N_{i}}\to\alg^{N_{i}}$ be an (often nonlinear) activation function.
The neural network model $f:\alg^{N_0}\to\alg^{N_0}$ is defined as
\begin{equation}
 f=\sigma_{H+1}\circ W_{H+1}\circ\sigma_{H}\circ W_H\circ\cdots\circ\sigma_1 \circ W_1\label{eq:nn_cstar}
\end{equation}
in the same manner as Eq.~\eqref{eq:nn}.
%We fix $\sigma_1,\ldots,{L+1}$ and find best possible matrices $W_1,\ldots,W_{L+1}$ by minimizing a loss function $L$.
We regard the set of $\alg$-valued matrices $W_1,\ldots,W_{H+1}$ as an $N$-dimensional $\alg$-valued vector of parameters, which is denoted as $\theta$, where $N=\sum_{i=1}^{H+1}N_{i-1}N_i$.
Then, we set an $\alg$-valued loss function $L:\alg^N\to\alg_{+}$, which depends on the $\alg$-valued parameters.
Here, $\alg_+$ is the set of all positive elements in $\alg$ (see Definition~\ref{def:positive}).

\subsubsection{Learning $C^*$-algebra-valued parameters}\label{subsec:gd_cstar}
To implement a gradient descent method on $\alg^N$ and minimize the $\alg$-valued loss function, {we propose a practical approach to applying the gradient descent method proposed by \citet{hashimoto21} to our case. }
\red{In more detail, we add the regularization term and simultaneously learn multiple models with interactions.}
We first define a gradient of $\alg$-valued functions on $\alg^N$.
\begin{definition}[$\alg$-valued gradient]\label{def:gradient}
Let $L:\alg^N\to\alg$ be an $\alg$-valued function defined on $\alg^N$ and let $\theta\in\alg^N$.
Assume there exists $\xi\in\alg^N$ such that for any $\delta\in\alg^N$ and any $z\in\mcl{Z}$,
%Assume there exists an bounded $\alg$-linear function $l:\alg^N\to\alg$ (that is, $l$ is linear and $l(\theta a)=l(\theta)a$ for any $\theta\in\alg^N$ and $a\in\alg$) such that for any $\delta\in\alg^N$ and any $z\in\mcl{Z}$,
\begin{equation*}
 \lim_{\delta\to 0,\ \delta\in\alg^N\setminus\{0\}}\frac{L(\theta+\delta)(z)-L(\theta)(z)-\blacket{\xi,\delta}(z)}{\Vert\delta\Vert}=0.
\end{equation*}
%By Theorem **, there exists a unique $\xi\in\alg^N$ such that $\blacket{\xi,\delta}=l(\delta)$.
In this case, we define $\xi$ as the {\em $\alg$-valued gradient} of $L$ at $\theta$ and denote it by $\nabla_{\alg,\theta} L$.
\end{definition}

\begin{example}
 Assume there exists a function $\tilde{L}:\mathbb{R}^N\times \mcl{Z}\to\mathbb{R}$ such that $L(\theta)(z)=\tilde{L}(\theta(z),z)$, that is, we can decompose $L$ into $\mathbb{R}$-valued functions indexed by $z$ on $\mathbb{R}^N$.
The function $\tilde{L}(\cdot,z)$ corresponds to an $\mathbb{R}$-valued (standard) loss function at $z\in\mcl{Z}$.
Assume $\tilde{L}(\cdot,z)$ has the (standard) gradient $\nabla_{\theta(z)}\tilde{L}(\cdot,z)\in\mathbb{R}^N$ for each $z\in\mcl{Z}$.
If the map $z\mapsto\nabla_{\theta(z)}\tilde{L}(\cdot,z)$ is contained in $\alg^N$, then it is the $\alg$-valued gradient.
\end{example}

%Note that the convergence to $l(\delta)$ in Definition~\ref{def:gradient} is pointwise.
%If the convergence is uniform, then $l(\delta)$ is equivalent to the Fr\'{e}chet derivative.
\if0
Note that if $\alg$ is a von Neumann algebra and the function $L$ has an $\alg$-linear Fr\'{e}chet derivative, then the Fr\'{e}chet derivative is the $\alg$-valued gradient defined in Definition~\ref{def:gradient}.
Therefore, the $\alg$-valued gradient is a weaker notion than the notion of Fr\'{e}chet derivative.
\fi

We generate a sequence $\theta_0,\theta_1\ldots$ using the $\alg$-valued gradient for finding a best possible $\theta$ to minimize the $\alg$-valued loss function $L$.
The basic gradient descent scheme is 
\begin{equation}
\begin{aligned}
  &\theta_0\in\alg^N,\\
&\theta_{t+1}=\theta_{t}-\nabla_{\alg,\theta_t}L\cdot \eta_t\ (t=0,1,\ldots),
\end{aligned}\label{eq:gd_scheme}
\end{equation} 
where $\eta_t\in\alg_+$ is the learning rate.

\begin{example}\label{ex:gradient}
 Assume there exists a function $\tilde{L}:\mathbb{R}^N\times \mcl{Z}\to\mathbb{R}$ such that $L(\theta)(z)=\tilde{L}(\theta(z),z)$.
Then, for each $z\in\mcl{Z}$, the scheme~\eqref{eq:gd_scheme} is reduced to
\begin{equation*}
\begin{aligned}
  &\theta_0(z)\in\mathbb{R}^N,\\
&\theta_{t+1}(z)=\theta_{t}(z)-\eta_t(z)\nabla_{\theta_t(z)}\tilde{L}(\cdot,z)\ (t=0,1,\ldots),
\end{aligned}
\end{equation*}
which is the standard gradient descent scheme on $\mathbb{R}^N$ with the learning rate $\eta_t(z)$.
Thus, computing the scheme~\eqref{eq:gd_scheme} is equivalent to computing the standard gradient descent scheme on $\mathbb{R}^N$ simultaneously for all $z\in\mcl{Z}$ in this case.
If the standard gradient descent at each $z\in\mcl{Z}$ generates a sequence $\theta_0(z),\theta_1(z),\ldots$ in $\mathbb{R}^N$ converging to some $\theta^*(z)$, then we obtain a sequence $\theta_0,\theta_1,\ldots$ in $\alg^N$ converging pointwise to the function $\theta^*$.
\end{example}

%and we can apply a Taylor's theorem for the Fr\'{e}chet derivative, which justifies the validity of the scheme~\eqref{eq:gd_scheme} [hashimoto21,].
%Even for the case where the convergence is not necessarily uniform, we have the following fact.
The above example implies that if $L(\theta)$ is defined as $L(\theta)(z)=\tilde{L}(\theta(z),z)$ for a function $\tilde{L}:\mathbb{R}^N\times \mcl{Z}\to\mathbb{R}$, then the scheme~\eqref{eq:gd_scheme} is computed without any interactions among variables $z\in\mcl{Z}$.
To learn the model with interactions among $z$, we assume $\mcl{Z}$ is a compact finite measure space, and add an $L_1$ regularization term to the loss function $L$ as
\begin{equation*}
 L{\opn{reg}}(\theta)=L(\theta) + \int_{z\in\mcl{Z}}L(\theta)(z)dz 1_{\alg}\cdot \lambda,
\end{equation*}
where $1_{\alg}$ is the multiplicative identity of $\alg$ ($1_{\alg}\equiv 1$, see Definition~\ref{def:multiplicative_identity}) and $\lambda\in\alg_+$ is a hyperparameter.
Since the regularization term is a constant function with respect to $z$, it affects uniformly on $L_{\opn{reg}}(\theta)$ at any $z\in\mcl{Z}$.
It has an effect of aggregating the gradient at each $z\in\mcl{Z}$ and adding the aggregated gradient to the gradient at each $z\in\mcl{Z}$. 
Indeed, since $\nabla_{\alg,\theta}L\in\alg^N$, each element of $\nabla_{\alg,\theta}L$ is integrable.
Thus, we have
\begin{equation*}
 \nabla_{\alg,\theta}L_{\opn{reg}}(\theta)= \nabla_{\alg,\theta}L + \int_{z\in\mcl{Z}}(\nabla_{\alg,\theta}L)(z)dz \odot\mathbf{1}_{\alg}\lambda,
\end{equation*}
where $\mathbf{1}_{\alg}=[1_{\alg},\ldots,1_{\alg}]^T\in\alg^N$ and $\odot$ represents the element-wise product of $\int_{z\in\mcl{Z}}(\nabla_{\alg,\theta})L(z)dz\in\mathbb{C}^N$ and $\mathbf{1}_{\alg}\in\alg^N$.

In practical computations, we cannot handle functions in the infinite-dimensional space $\alg$.
%Therefore, we modify the scheme~\eqref{eq:gd_scheme}.
Therefore, we set a finite-dimensional subspace $\mcl{V}$ of $\alg^N$ and a map $P:\alg^N\to\mcl{V}$.
Moreover, we set the learning rate $\eta_t$ and the hyperparameter $\lambda$ as constant functions $\eta_t\equiv\tilde{\eta}_t$ and $\lambda\equiv\tilde{\lambda}$ for some $\tilde{\eta}_t,\tilde{\lambda}\in\mathbb{R}+$.
Then, the practical scheme is 
\begin{equation}
\begin{aligned}
  &\theta_0\in\mcl{V},\\
&\theta_{t+1}=\theta_{t}-\tilde{\eta}_tP(\nabla_{\alg,\theta_t}L)\\
&\quad-\tilde{\lambda}\int_{z\in\mcl{Z}}P(\nabla_{\alg,\theta}L)(z)dz \odot\mathbf{1}_{\alg}\ (t=0,1,\ldots).
\end{aligned}\label{eq:gd_scheme_mod}
\end{equation}
Typically, we set $P(\theta)$ as a regression of $\theta(z_1),\ldots,\theta(z_n)$ in $\mcl{V}$, where $z_1,\ldots,z_n$ for some $n\in\mathbb{N}$ are properly chosen fixed points.
As a result, the sequence $\theta_0,\theta_1,\ldots$ is contained in $\mcl{V}$, which enables us to compute the scheme~\eqref{eq:gd_scheme_mod} practically.
In addition, by applying $P$, the scheme becomes interactive with respect to $z\in\mcl{Z}$.
That is, $P(\theta)(z_0)$ is determined by using the values $\theta(z)$ for $z\neq z_0$.
Regarding the choice of $\mcl{V}$, we choose it on the basis of the following fact.
\begin{fact}\label{fact:uniform}
Let $\mcl{Z}$ be a compact metric space and let $d_{\mcl{Z}}$ be the metric on $\mcl{Z}$.
Let $\zeta_0,\zeta_1,\ldots\in\alg$ be a sequence such that there exists a function $\zeta^*$ on $\mcl{Z}$, $\lim_{t\to\infty}\zeta_t(z)=\zeta^*(z)$ for any $z\in\mcl{Z}$.
If $\zeta_0,\zeta_1,\ldots$ is uniformly Lipschitz continuous,
that is, there exists $C>0$ such that for any $t\in\mathbb{N}$, 
\begin{equation*}
\vert \zeta_t(z_1)-\zeta_t(z_2)\vert\le Cd_{\mcl{Z}}(z_1,z_2),
\end{equation*}
then $\zeta_0,\zeta_1,\ldots$ converges uniformly to $\zeta^*$ and $\zeta^*\in\alg$.
\end{fact}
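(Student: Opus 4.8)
The plan is to prove this as a straightforward application of the classical Arzelà–Ascoli theorem together with the fact that uniform Lipschitz continuity with a common constant $C$ immediately gives equicontinuity of the family $\{\zeta_t\}_{t\in\mathbb{N}}$. First I would observe that $\mcl{Z}$, being a compact metric space, together with the hypothesis that $\vert\zeta_t(z_1)-\zeta_t(z_2)\vert\le C d_{\mcl{Z}}(z_1,z_2)$ uniformly in $t$, makes $\{\zeta_t\}$ an equicontinuous family: given $\varepsilon>0$, take $\delta=\varepsilon/C$, and then $d_{\mcl{Z}}(z_1,z_2)<\delta$ forces $\vert\zeta_t(z_1)-\zeta_t(z_2)\vert<\varepsilon$ for every $t$. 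I also need pointwise boundedness of the family, which follows because $\zeta_t(z)\to\zeta^*(z)$ pointwise, so each sequence $(\zeta_t(z))_t$ is a convergent sequence of complex numbers, hence bounded at each fixed $z$.

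Next I would bring in Arzelà–Ascoli: an equicontinuous, pointwise-bounded family of continuous functions on a compact metric space is relatively compact in $C(\mcl{Z})$ with the uniform norm, so every subsequence of $\{\zeta_t\}$ has a further subsequence converging uniformly to some limit in $C(\mcl{Z})$. The key point is that any such uniform limit must agree with $\zeta^*$, since uniform convergence implies pointwise convergence and the pointwise limit is already pinned down to be $\zeta^*$. Having shown every subsequence of $\{\zeta_t\}$ has a further subsequence converging uniformly to $\zeta^*$, a standard subsequence argument (if $\zeta_t\not\to\zeta^*$ uniformly, extract a subsequence staying a fixed distance away, contradicting the existence of a uniformly convergent sub-subsequence with limit $\zeta^*$) shows the whole sequence converges uniformly to $\zeta^*$. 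Finally, a uniform limit of continuous functions is continuous, so $\zeta^*\in C(\mcl{Z})=\alg$, completing the proof.

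Alternatively, and perhaps more elementarily, I could avoid invoking Arzelà–Ascoli by a direct $\varepsilon/3$ argument: cover $\mcl{Z}$ by finitely many balls of radius $\varepsilon/(3C)$ centered at $z_1,\dots,z_m$ (possible by compactness), choose $T$ so large that $\vert\zeta_t(z_j)-\zeta_s(z_j)\vert<\varepsilon/3$ for all $t,s\ge T$ and all $j=1,\dots,m$ (possible since finitely many convergent, hence Cauchy, sequences are involved), and then for arbitrary $z\in\mcl{Z}$ pick the nearby center $z_j$ and use the Lipschitz bound twice to get $\vert\zeta_t(z)-\zeta_s(z)\vert<\varepsilon$ for $t,s\ge T$. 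This shows $\{\zeta_t\}$ is uniformly Cauchy, hence uniformly convergent; its limit is $\zeta^*$ by uniqueness of limits, and $\zeta^*$ is continuous as a uniform limit of continuous functions, so $\zeta^*\in\alg$. I would likely present this second, self-contained version in the paper.

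I do not anticipate a genuine obstacle here — the statement is essentially a packaging of classical facts. The only point requiring a little care is making sure the uniform Lipschitz hypothesis is used in the right place (for equicontinuity, i.e. to pass from finitely many controlled points to all of $\mcl{Z}$) rather than being confused with a uniform bound on $\vert\zeta_t\vert$ itself; the pointwise-boundedness needed to get started comes for free from pointwise convergence. I would also note explicitly that compactness of $\mcl{Z}$ is used twice: once to extract a finite subcover (or to apply Arzelà–Ascoli) and implicitly in identifying $\alg$ with $C(\mcl{Z})$.
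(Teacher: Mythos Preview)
Your proposal is correct, and the second, self-contained $\varepsilon/3$ argument you say you would present is essentially the paper's own proof: the paper covers $\mcl{Z}$ by finitely many $\varepsilon$-balls via compactness, applies the uniform Lipschitz bound on the two outer terms and pointwise convergence (hence Cauchy-ness) at the finitely many centers to obtain a uniform Cauchy estimate of size $(2C+1)\varepsilon$, and then concludes $\zeta^*\in C(\mcl{Z})=\alg$ by completeness in the sup norm. Your Arzel\`a--Ascoli route is a valid alternative, but the paper opts for the direct argument.
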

On the basis of Fact~\ref{fact:uniform}, we set $\mcl{V}=\opn{Span}\{v_1,\ldots,v_l\}^N$ for $l\in\mathbb{N}$ and Lipschitz continuous functions $v_1,\ldots,v_l\in\alg$.
Then, each element $\theta_{t,j}\in\alg$ of $\theta_t$ is represented as $\sum_{i=1}^nc_{t,i}v_i$ for some $c_{t,i}\in\mathbb{C}$.
If there exists $C_i>0$ such that for any $t\in\mathbb{N}$ $\vert c_{t,i}\vert\le C_i$ for $i=1,\ldots,n$, then the sequence $\theta_{0,j},\theta_{1,j},\ldots$ is uniformly Lipschitz.
Thus, if it converges pointwise, the convergence is uniform.
Uniform convergence is a stronger notion than pointwise convergence and preserves properties of the sequence of functions to its limit.
For example, the uniform limit of a sequence of uniformly continuous functions is also uniformly continuous, but the pointwise limit of a sequence of continuous functions is not always continuous.
Thus, the uniform convergence is more suitable when we need the parameter $\theta$ to preserve properties as a function throughout the gradient descent.

\begin{remark}
The scheme~\ref{eq:gd_scheme_mod} is a basic scheme of the $\alg$-valued gradient descent.
We can also consider variants of the scheme such as SGD and Adam in the same manner as the standard gradient descent on $\mathbb{R}^N$.
\end{remark}

%\color{red}
\paragraph{Computation complexity}
The computation complexity of learning the $C^*$-algebra net is $O(pl)$, where $p$ is the number of parameters of the network and $l$ is the number of basis functions in $\mcl{V}$. %we use in approximating elements in the $C^*$-algebra $\mathcal{A}=C(\mathcal{Z})$. 
It is $l$ times as large as that of the corresponding $\mathbb{R}$-valued model.

We illustrate the difference between $\mathbb{R}$-valued and $\alg$-valued models for a specific case of the linear regression problem on $\mathbb{R}$ in Section~\ref{ap:diff}.
\color{black}

\section{Applications}\label{sec:application}
In this section, we show examples of practical applications of our framework.
In Subsection~\ref{subsec:density_estimation}, we apply our framework to density estimation with normalizing flow.
In Subsection~\ref{subsec:few-shot}, we apply our framework to few-shot learning.
By using a model with function-valued parameters, we can learn features of data efficiently even with a limited number of samples.
Our framework is general and its application is not restricted to the above two cases.
In Subsection~\ref{subsec:other_appl}, we show other examples of applications.
\subsection{Density estimation}\label{subsec:density_estimation}
\subsubsection{Density estimation with normalizing flow}\label{subsec:nf_standard}
We can improve density estimation by replacing parameters in a model by $\alg$-valued ones.
In this paper, we focus on using normalizing flow~\citep{dinh14,dinh17,teshima20}, but the application is not limited to normalizing flow.
We first review density estimation with normalization flow briefly.
Let $\Omega$ be a probability space.
We construct $\mb{f}=\mb{f}^{\bs{\theta}}$ in Eq.~\eqref{eq:nn} as a measurable, invertible, and differentiable map that transforms a normal distribution into the distribution of data.
That is, if the distribution of a random variable $X$ defined on $\Omega$ and taking its value in $\mathbb{R}^{N_0}$ is the normal distribution, then the distribution of $\mb{f}^{\bs{\theta}}(X)$ is the distribution of data.
The density $p^{\bs{\theta}}_{\opn{data}}$ of $\mb{f}^{\bs{\theta}}(X)$ is calculated as 
\begin{equation}
 p^{\bs{\theta}}_{\opn{data}}(\mb{x})=p_{\opn{n}}\big((\mb{f}^{\bs{\theta}})^{-1}(\mb{x})\big)\big\vert\;\opn{det}\big(\nabla_{\mb{x}}(\mb{f}^{\bs{\theta}})^{-1}\big)\big\vert\label{eq:change_val}
\end{equation}
for $x\in\mathbb{R}^{N_0}$, where $p_{\opn{n}}$ is the density of the normal distribution.
By using the formula~\eqref{eq:change_val}, we compute the likelihood of given samples $\mb{x}_1,\ldots,\mb{x}_n\in\mathbb{R}^{N_0}$ and set the loss function $\mb{L}:\mathbb{R}^N\to\mathbb{R}_+$ as a negative log likelihood:
\begin{equation*}
 \mb{L}(\bs{\theta})=-\sum_{i=1}^n\log p^{\bs{\theta}}_{\opn{data}}(\mb{x}_i).
\end{equation*}

\subsubsection{Generalization to $C^*$-algebra}\label{subsec:nf_cstar}
We generalize the above setting to that with $C^*$-algebra.
%We construct $f=f^{\theta}$ in Eq.~\eqref{eq:nn} as a invertible differentiable map which transforms the normal distribution into the distribution of data, that is, 
Let $\mcl{Z}$ be a compact probability space and let $X$ be a random variable defined on $\mcl{Z}\times\Omega$ and taking its value in $\mathbb{R}^{N_0}$.
Assume that for any $\omega\in\Omega$, $X(\cdot,\omega)$ is continuous on $\mcl{Z}$.
Let $\mcl{D}$ be the probability measure on $\mcl{Z}$.
The map $f=f^{\theta}$ in Eq.~\eqref{eq:nn_cstar} is constructed so that if for any $z\in\mcl{Z}$, the distribution of a random variable $X(z,\cdot)$ is a normal distribution on $\mathbb{R}^{N_0}$, then the distribution of the random variable $\omega\mapsto f^{\theta}(X(\cdot,\omega))(z)$ is the distribution of data.
The density $p^{\theta,z}_{\opn{data}}$ of the random variable $\omega\mapsto f^{\theta}(X(\cdot,\omega))(z)$ is calculated as 
\begin{equation}
 p^{\theta,z}_{\opn{data}}(\mb{x})=p_{\opn{n}}^z\big((f^{\theta})^{-1}({x})(z)\big)\;\big\vert\opn{det}\big(\nabla_{\alg,{x}}(f^{\theta})^{-1}(z)\big)\big\vert\label{eq:change_val_cstar}
\end{equation}
for $\mb{x}\in\mathbb{R}^{N_0}$, where ${x}\in\alg^{N_0}$ is the constant function ${x}\equiv \mb{x}$.
By using the formula~\eqref{eq:change_val_cstar}, we compute the likelihood of given samples $\mb{x}_1,\ldots,\mb{x}_n\in\mathbb{R}^{N_0}$ and set the loss function $L:\alg^N\to\alg_+$ as a negative log likelihood:
\begin{align}
 %&\tilde{L}(\theta(z),z)=-\sum_{i=1}^n\log p^{\theta}_{\opn{data}}(z,x_i),\nn\\
&L(\theta)(z)=-\sum_{i=1}^n\log p^{\theta,z}_{\opn{data}}(\mb{x}_i).\label{eq:loss_nf} %\tilde{L}(\theta(z),z).
\end{align}
We apply the $\alg$-valued gradient descent method proposed in Subsection~\ref{subsec:gd_cstar} to minimize the loss function~\eqref{eq:loss_nf}.
Since $L(\theta)(z)$ depends only on $\theta(z)$, we have $L(\theta)(z)=\tilde{L}(\theta(z),z)$ for some function $\tilde{L}:\mathbb{R}^N\times \mcl{Z}\to\mathbb{R}$.
As we explained in Example~\ref{ex:gradient}, the $\alg$-valued gradient of $L$ is calculated by the standard gradient of $\tilde{L}(\cdot,z)$.
After learning the model, we obtain the distribution ${p}^{\theta,z}_{\opn{data}}$ at each $z\in\mcl{Z}$.
Ideally, ${p}^{\theta,z}_{\opn{data}}$ is independent of $z$ since the distribution of data is independent of $z$.
Practically, we get an estimation of the density $\tilde{p}^{\theta}_{\opn{data}}$ of data by integrating $p^{\theta,z}_{\opn{data}}$ as $\tilde{p}^{\theta}_{\opn{data}}(\mb{x})=\int_{z\in\mcl{Z}}p^{\theta,z}_{\opn{data}}(\mb{x})d\mcl{D}(z)$.

\begin{figure}[t]
\centering
 \subfigure[{Existing}]{\includegraphics[scale=0.27]{./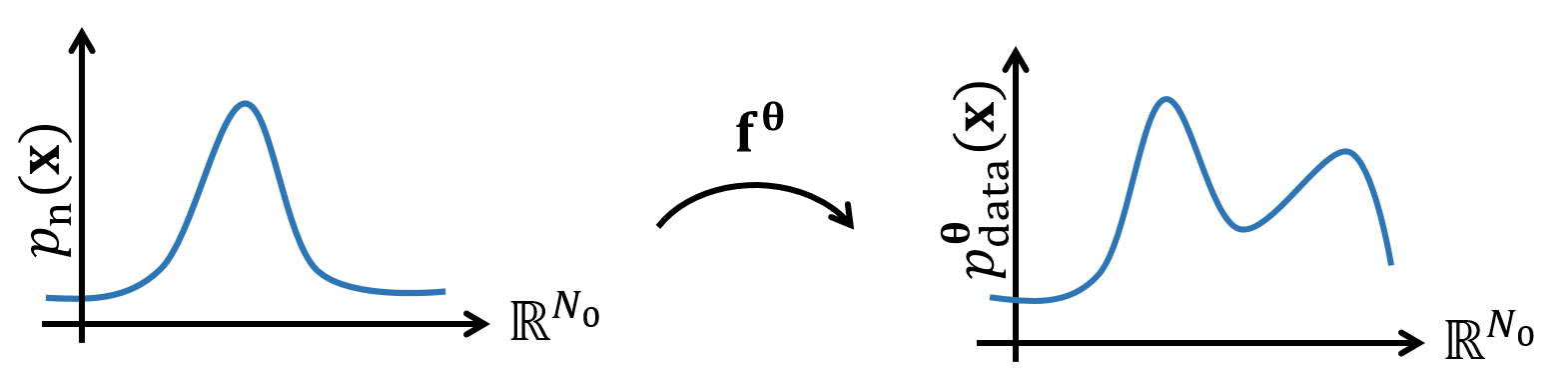}}
\subfigure[Ours]{\includegraphics[scale=0.27]{./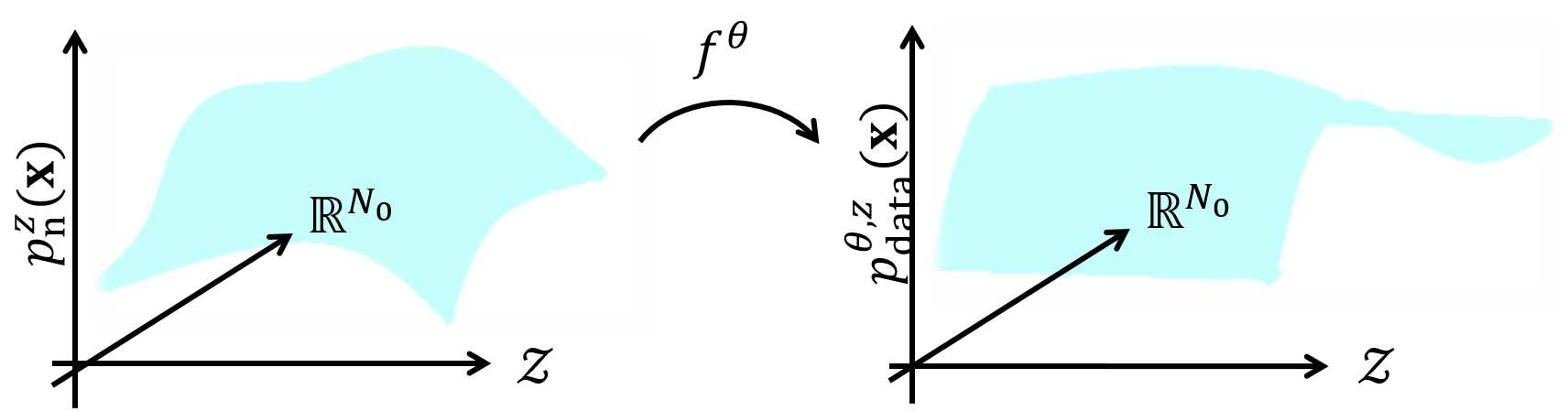}}
\caption{{Difference between the existing method and our method of density estimation with normalizing flow. (Although the set $\mcl{Z}$ in (b) is depicted as if it is a one-dimensional space, we can set $\mcl{Z}$ as an arbitrary compact space.)}}\label{fig:nf_overview}
\end{figure}

\begin{figure}[t]
\centering
 \subfigure[Swiss roll]{\includegraphics[scale=0.15]{./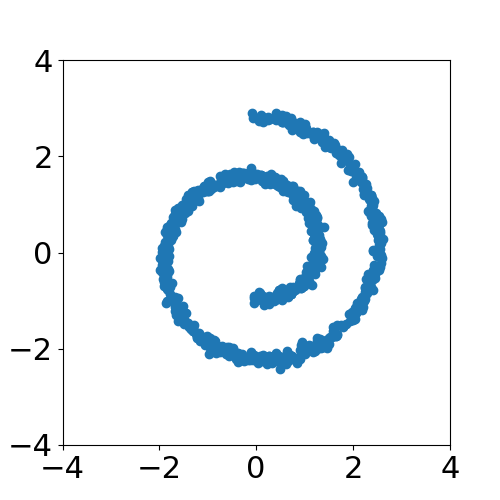}}\ %\qquad\qquad
\subfigure[Circles]{\includegraphics[scale=0.15]{./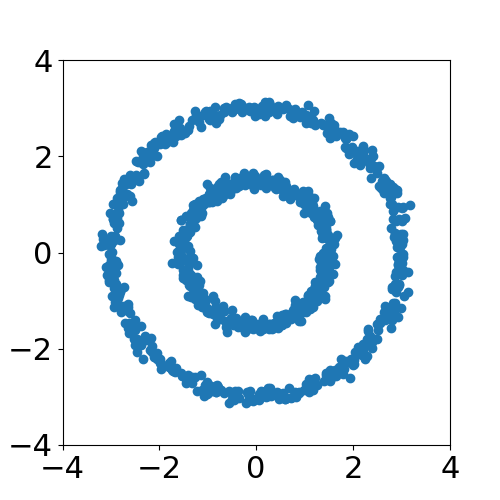}}\ %\vspace{-.3cm}\\
 \subfigure[Swiss roll {\small (training)}]{\includegraphics[scale=0.15]{./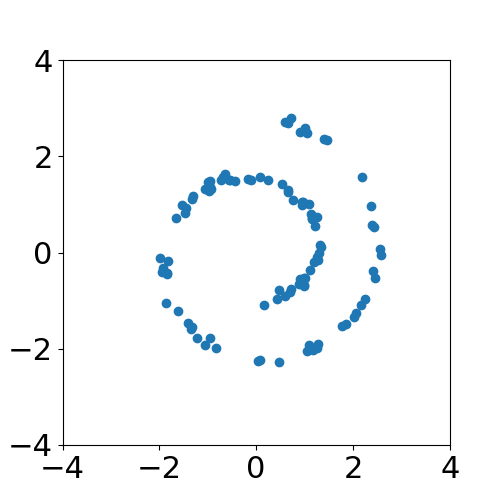}}\ %\qquad\qquad
\subfigure[Circles {\small (training)}]{\includegraphics[scale=0.15]{./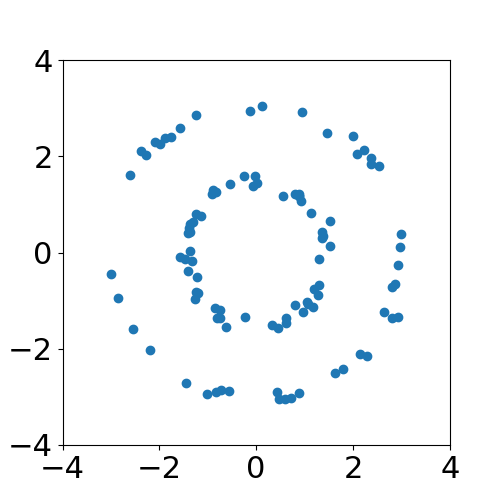}}
\caption{{Samples in the datasets ((c) and (d) show training samples)}}\label{fig:samples}
\end{figure}

\subsubsection{Numerical results}\label{subsec:nf_numexp}
\begin{figure}[t]
\centering
\newcolumntype{C}{>{\centering\arraybackslash}X}
\subfigure[Swiss roll]{
\begin{tabularx}{\linewidth}{cCCC}
%\begin{tabular}{cccc}
\setlength{\tabcolsep}{0mm} 
&
\multirow{5}{*}{\includegraphics[scale=0.18]{./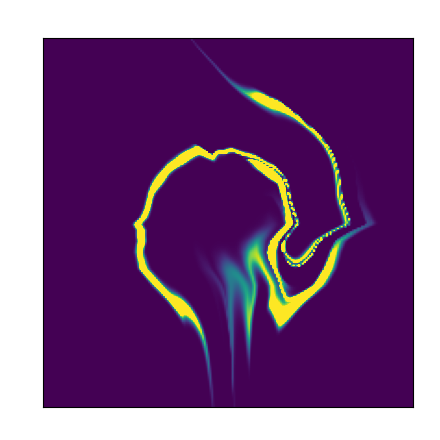}}&
\multirow{5}{*}{\includegraphics[scale=0.18]{./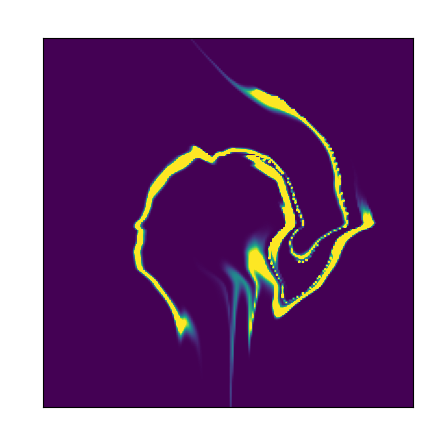}}&
\multirow{5}{*}{\includegraphics[scale=0.18]{./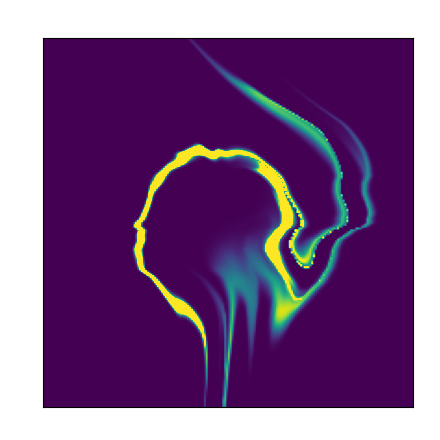}}\\
&&&\\
\footnotesize \underline{Standard}&&&\\
\footnotesize Density&&&\\
&&&\\
\footnotesize NLL&\footnotesize \ $3.68\pm 0.413$ & \footnotesize\ $4.07\pm 0.340$ & \footnotesize\ $3.91\pm 0.621$\\
&
\multirow{5}{*}{\includegraphics[scale=0.18]{./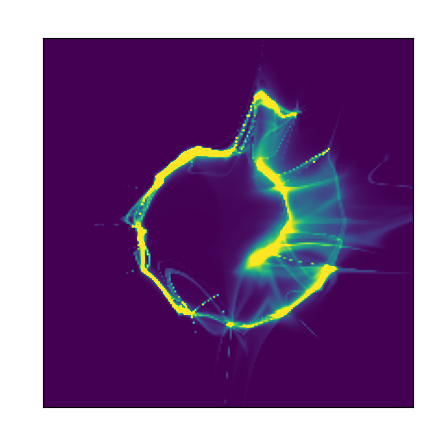}}&
\multirow{5}{*}{\includegraphics[scale=0.18]{./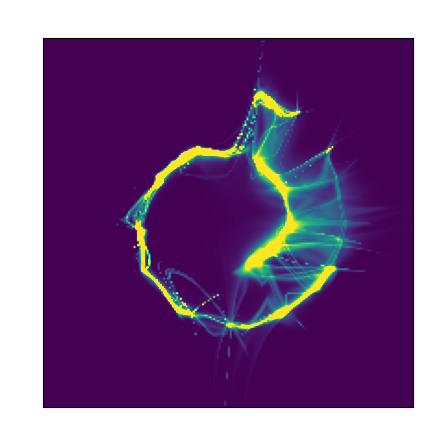}}&
\multirow{5}{*}{\includegraphics[scale=0.18]{./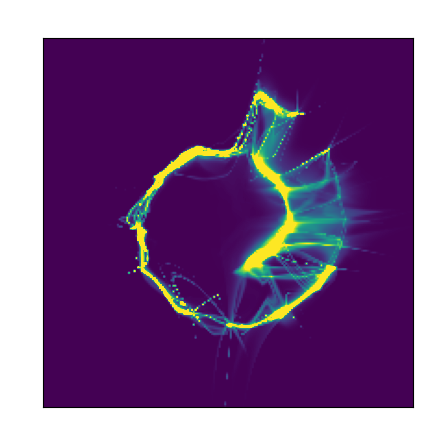}}\\
&&&\\
\footnotesize {\underline{Discrete}}&&&\\
\footnotesize Density&&&\\
&&&\\
\footnotesize NLL&\footnotesize \ $4.09\pm 0.341$ & \footnotesize\ $4.00\pm 0.198$ & \footnotesize\ $4.03\pm 0.268$\\
&
\multirow{5}{*}{\includegraphics[scale=0.18]{./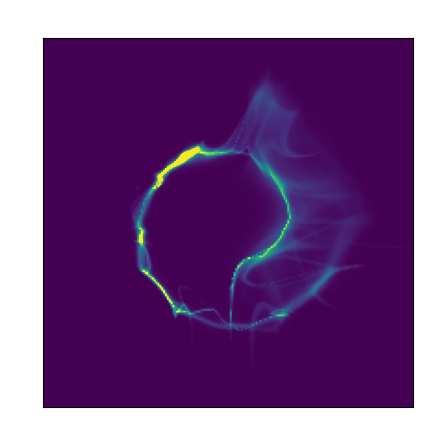}}&
\multirow{5}{*}{\includegraphics[scale=0.18]{./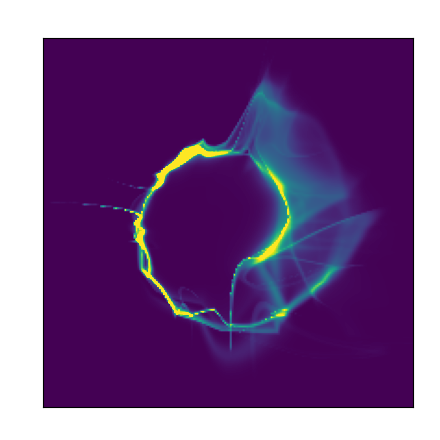}}&
\multirow{5}{*}{\includegraphics[scale=0.18]{./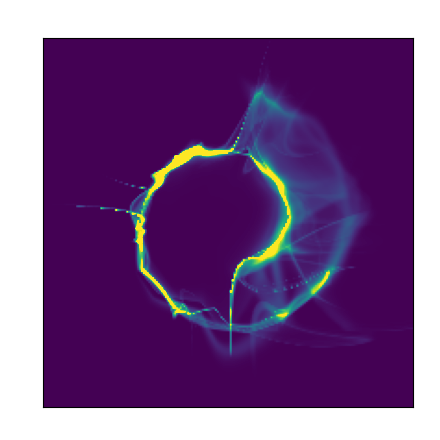}}\\
&&&\\
\footnotesize \underline{Ours}&&&\\
\footnotesize Density&&&\\
&&&\\
\footnotesize NLL&\footnotesize \ $3.75\pm 0.022$ & \footnotesize$\mb{2.97\pm 0.029}$ & \footnotesize$\mb{2.94\pm 0.057}$\\
&\footnotesize $2000$ epochs & \footnotesize {$2500$ epochs} & \footnotesize {$3000$ epochs}
\end{tabularx}}
\subfigure[Circles]{
\begin{tabularx}{\linewidth}{cCCC}
\setlength{\tabcolsep}{0mm} 
&
\multirow{5}{*}{\includegraphics[scale=0.18]{./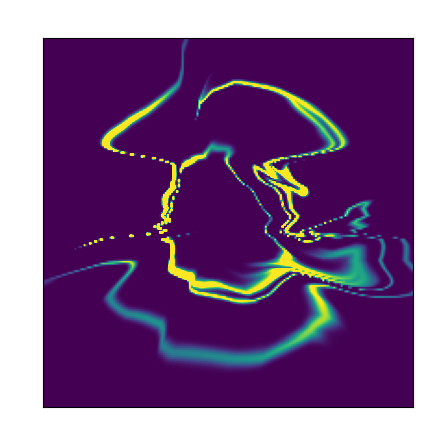}}&
\multirow{5}{*}{\includegraphics[scale=0.18]{./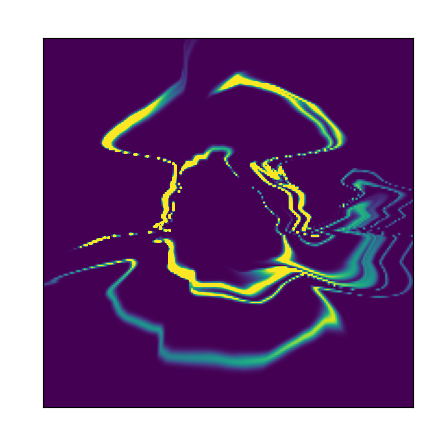}}&
\multirow{5}{*}{\includegraphics[scale=0.18]{./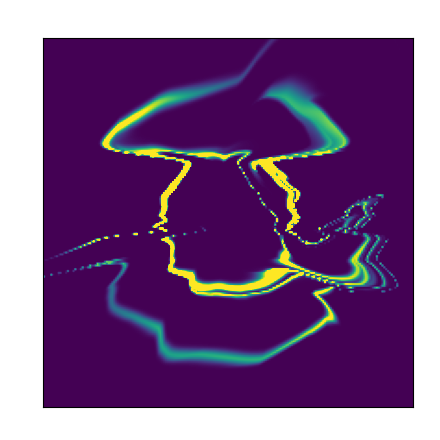}}\\
&&&\\
\footnotesize \underline{Standard}&&&\\
\footnotesize Density&&&\\
&&&\\
\footnotesize NLL&\footnotesize \ $4.81\pm 0.211$ & \footnotesize\ $4.73\pm 0.556$ & \footnotesize\ $4.62\pm 0.254$\\
&
\multirow{5}{*}{\includegraphics[scale=0.18]{./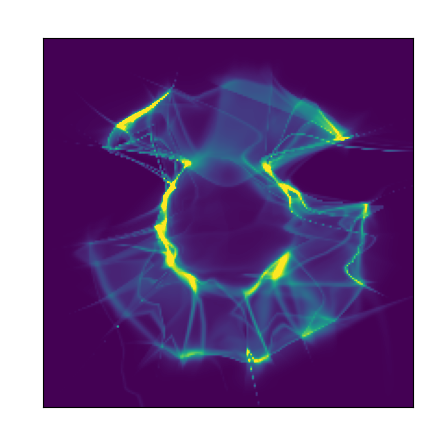}}&
\multirow{5}{*}{\includegraphics[scale=0.18]{./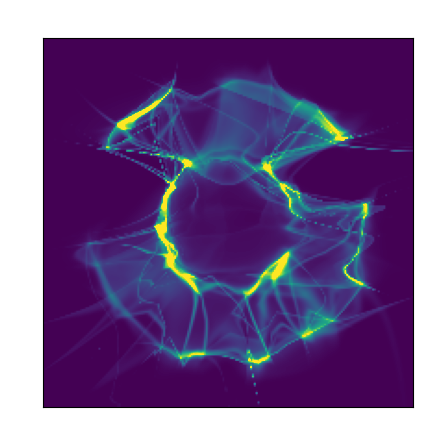}}&
\multirow{5}{*}{\includegraphics[scale=0.18]{./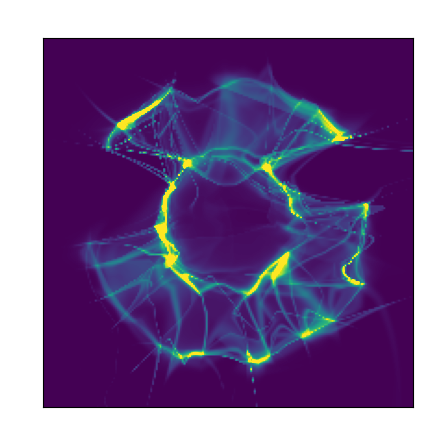}}\\
&&&\\
\footnotesize {\underline{Discrete}}&&&\\
\footnotesize Density&&&\\
&&&\\
\footnotesize NLL&\footnotesize \ $4.91\pm 0.129$ & \footnotesize\ $5.00\pm 0.135$ & \footnotesize\ $5.14\pm 0.200$\\
&
\multirow{5}{*}{\includegraphics[scale=0.18]{./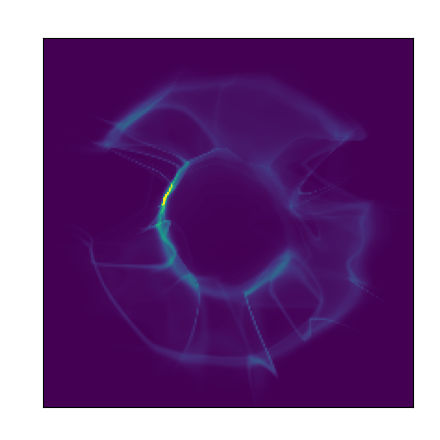}}&
\multirow{5}{*}{\includegraphics[scale=0.18]{./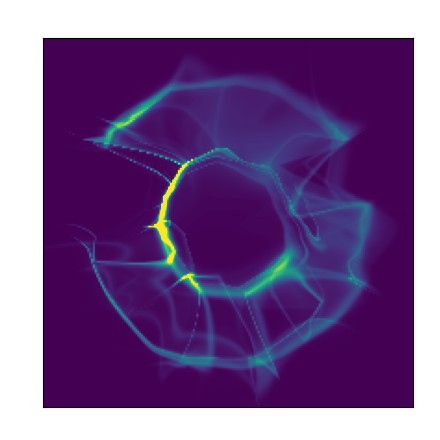}}&
\multirow{5}{*}{\includegraphics[scale=0.18]{./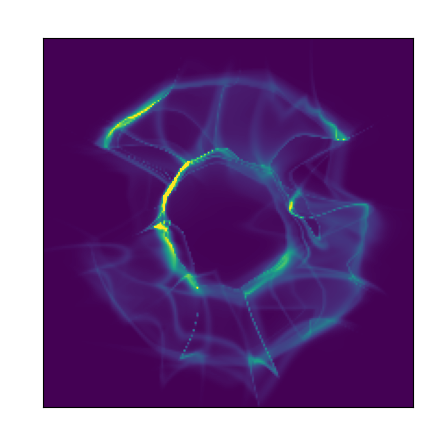}}\\
&&&\\
\footnotesize \underline{Ours}&&&\\
\footnotesize Density&&&\\
&&&\\
\footnotesize NLL&\footnotesize \ $4.97\pm 0.210$ & \footnotesize$\mb{4.17\pm 0.245}$ & \footnotesize$\mb{4.17\pm 0.157}$\\
&\footnotesize $2000$ epochs & \footnotesize {$2500$} epochs & \footnotesize {$3000$ epochs}
\end{tabularx}}
\caption{{Comparison among estimated densities and negative log-likelihoods (NLLs) with existing methods and our method. (Each value of negative log likelihood is an average ($\pm$ a standard deviation) over 5 independent runs.)}}\label{fig:density}
\end{figure}
We show the validity of applying our method to density estimation with normalizing flow numerically.
We generated two toy datasets: swiss roll and circles.
Each dataset contains $1000$ samples, illustrated in Fig.~\ref{fig:samples}.
We put $100$ samples for training samples from the datasets.
The estimation of the densities is challenging since the amount of training samples is small. 
We used masked autoregressive flow~\citep{papamakarios17} to construct the invertible differentiable map $f^{\theta}$, and for the gradient descent method, we used Adam.
We set the learning rate so that it decays polynomially starting from $0.001$ with decay rate $0.5$.
We set $\mcl{Z}=[-4,4]\times [-4,4]\subseteq \mathbb{R}^2$ and set the finite-dimensional subspace $\mcl{V}$ of $\alg^N$ as $\opn{Span}\{v_1,\ldots,v_l\}^N$, where $l=9$, $v_i(z)=e^{-10\Vert z_i-z\Vert^2}$, $z_1=[0,0]$, and $z_{4i+j+1}=[(2+i)\sin(2\pi (j-1+0.5i)/4),(2+i)\cos(2\pi (j-1+0.5i)/4)]$ for $i=0,1$ and $j=1,\ldots,4$.
The density $p_{\opn{n}}(z,\cdot)$ of the base normal distribution at $z\in\mcl{Z}$ is set as the normal distribution with mean $z$ and standard deviation $1$.
We set the map ${P}:\alg\to\mcl{V}$ as the kernel ridge regression, where the $i$th element of ${P}(\theta)$ for ${\theta}=[\theta_1,\ldots,\theta_N]^T\in\alg^N$ is computed as $[v_1,\ldots,v_n](G+\mu I)^{-1}[{\theta}_i(z_1),\ldots,{\theta}_i(z_l)]^T$.
Here, $G$ is the Gram matrix whose $(i,j)$-element is $e^{-10\Vert z_i-z_j\Vert^2}$ and $\mu\in\mathbb{R}_+$ is a hyperparameter of the regression {that controls the strength of a regularization term of the regression}.
In this experiment, we set $\mu=0.1$.
In addition, we set the hyperparameter $\tilde{\lambda}$ in Eq.~\eqref{eq:gd_scheme_mod} as $0.3$ and set the distribution $\mcl{D}$ on $\mcl{Z}$ as the uniform distribution on $\bigcup_{i=1}^9\{z\in\mcl{Z}\ \mid\ \Vert z-z_i\Vert\le 0.05\}$.
We compared our method proposed in Subsection~\ref{subsec:nf_cstar} with two straightforward methods, (1) the standard method explained in Subsection~\ref{subsec:nf_standard}, called ``standard'', and (2) learning the model whose base normal distribution is $p_{\opn{n}}(z_i,\cdot)$ for $i=1,\ldots 9$ separately and computing the mean value of the results, called {``discrete''}.
Note that the method (1) corresponds to the case of $l=1$ and $\tilde{\lambda}=\mu=0$ and the method (2) corresponds to the case of $l=9$ and $\tilde{\lambda}=\mu=0$ of the proposed method. 

Fig.~\ref{fig:density} shows the estimated densities {and negative log-likelihoods computed with the $900$ samples other than the training samples in each dataset}. %at epochs $500$, $1000$, and $3000$.
{We see that our method with 2500 or 3000 epochs gives the best estimation of the densities, and its negative log-likelihoods are the smallest for each dataset.}
Since the amount of samples is small, the standard method fails to capture the detailed shape of the density {and its shape is blurred}.
By learning multiple models separately and computing the mean value, the shape becomes more clear, but not smooth.
Our method captures the shape clearly and smoothly because it allows us to learn multiple models simultaneously with interactions by using smooth functions.

%\color{red}
\paragraph{Computation complexity and memory usage}
The method ``discrete'' learns $9$ real-valued models separately, and in the above experiment, we compared our proposed method with $l=9$ to the method ``discrete''.
The computation complexities of these two methods are the same.
Precisely, in our method, we compute the approximation of elements in $\mathcal{A}$ when updating the parameters. (The operator $P$ in Eq. (4))
However, the computation complexity of this part can be ignored since we used the kernel ridge regression and $l\ll p$.
The approximation of a function $a\in\mathcal{A}$ is computed by just the multiplication of an $l\times l$ fixed matrix to the vector $[a(z_1),\ldots,a(z_l)]$. %, where $z_i\in\mathcal{Z}$ is fixed. 
Our method outperforms the method ``Discrete'' since our method properly combines multiple models continuously.
%In our framework, multiple models should be learned simultaneously, whereas the method ``Discrete'' can be learned sequentially.
%\red{The amount of memory needed by our $C^*$-net increases linearly with respect to the number of interacting models.}
%We will add the specific memory usage in our manuscript. 
Regarding the memory usage, the amount of memory only increases linearly with respect to $l$.

\color{black}

\if0
\begin{figure}[t]
\centering
 \subfigure[Swiss role, Discrete]{\includegraphics[scale=0.3]{./swiss_3000_mean.png}}
\subfigure[Swiss role, Ours]{\includegraphics[scale=0.3]{./swiss_3000_integ.png}}
 \subfigure[Circles, Discrete]{\includegraphics[scale=0.3]{./circles_3000_mean.png}}
\subfigure[Circles, Ours]{\includegraphics[scale=0.3]{./circles_3000_integ.png}}
\caption{Comparison between estimated densities with $3000$ epochs.}\label{fig:density_large}
\end{figure}
\fi

\subsection{Few-shot learning}\label{subsec:few-shot}
\subsubsection{Few-shot learning for classification}\label{subsec:few_shot_form}
We can improve the accuracy of few-shot learning by replacing parameters in a model by $\alg$-valued ones.
In this paper, we focus on classification tasks, but the application of our framework is not limited to classification tasks.
Few-shot learning challenges a model to be learned with a limited number of samples~\citep{fei-fei06,lake11}.
We first formulate the setting for a supervised classification task briefly.
We construct $\mb{f}=\mb{f}^{\bs{\theta}}$ in Eq.~\eqref{eq:nn} as a map which maps a sample such as an image to its label, that is, the probability to belong to each class.
For given samples $\mb{x}_1,\ldots,\mb{x}_n\in\mathbb{R}^{N_0}$ and their labels $\mb{y}_1,\ldots,\mb{y}_n\in\mathbb{R}^{N_{H+1}}$, we set the loss function $\mb{L}:\mathbb{R}^N\to\mathbb{R}_+$ as the cross categorical entropy:
\begin{equation*}
 \mb{L}(\bs{\theta})=-\sum_{i=1}^n\blacket{\mb{y}_i,\log(\mb{f}^{\bs{\theta}}(\mb{x}_i))}.
\end{equation*}
For few-shot learning, one approach to learning the model effectively is meta-learning~\citep{ravi17,finn17,rusu19}.
Before learning a model for a new task $\mcl{T}_{\opn{new}}$, we learn a meta-model with multiple tasks $\mcl{T}_1,\ldots,\mcl{T}_m$ to extract common features among all tasks.
For example, \citet{rusu19} propose to learn a model to obtain a map $Z$ that maps a task $\mcl{T}_i$ to its low-dimensional representation $z_i$ and a map $\Theta$ that maps $z_i$ into the parameter of a model specific to the task $\mcl{T}_i$.
Then, for a given new task $\mcl{T}_{\opn{new}}$, we can get a corresponding low-dimensional representation $z_{\opn{new}}=Z(\mcl{T}_{\opn{new}})$ and a parameter $\Theta(z_{\opn{new}})$.
By using $\Theta(z_{\opn{new}})$ as the initial value of the parameter $\theta$, we can learn the model for the new task $\mcl{T}_{\opn{new}}$ efficiently even with a limited number of samples for the new task.

\begin{figure}[t]
\centering
 \subfigure[{Existing}]{\includegraphics[scale=0.27]{./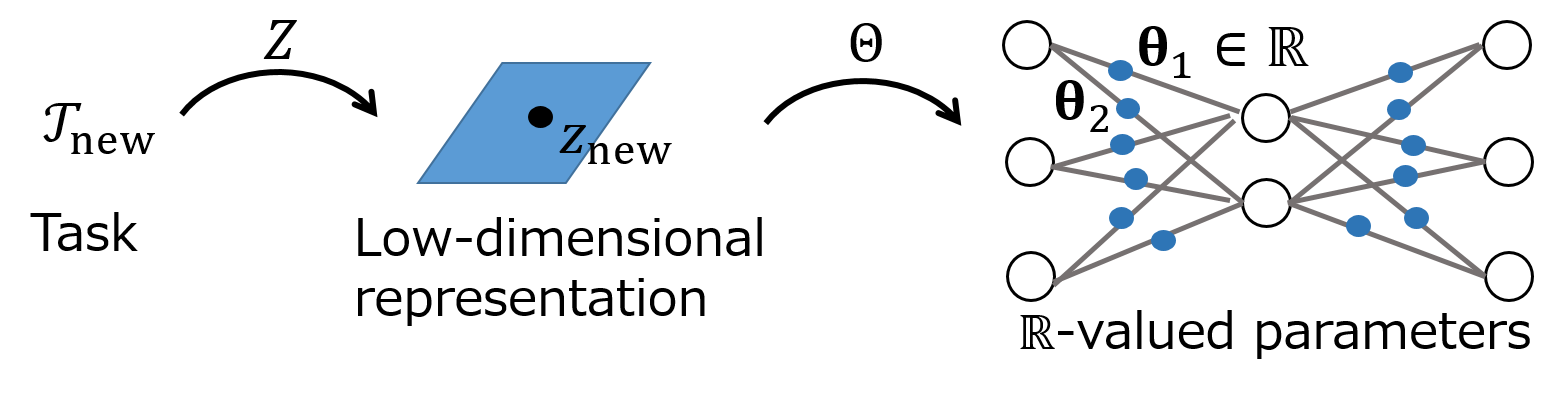}}
\subfigure[Ours]{\includegraphics[scale=0.27]{./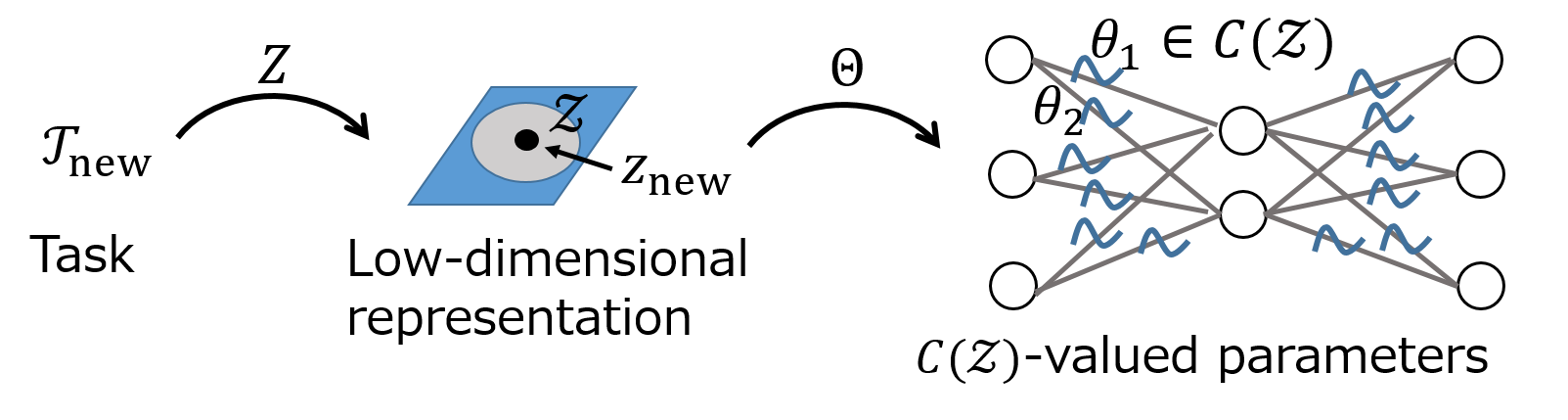}}
\caption{{Difference between the existing method and our method for the transformation of a task to parameters of the model.}}\label{fig:nf_overview}
\end{figure}

\subsubsection{Generalization to $C^*$-algebra}
We generalize the above setting to that with $C^*$-algebra.
We construct $f=f^{\theta}$ in Eq.~\eqref{eq:nn_cstar} as a map that maps the constant function ${x}\equiv \mb{x}$ for a sample $\mb{x}$ to the constant function ${y}\equiv \mb{y}$ for its label $\mb{y}$.
As described in Section~\ref{subsec:few_shot_form}, for a given new task $\mcl{T}_{\opn{new}}$, we can get a corresponding low-dimensional representation $z_{\opn{new}}=Z(\mcl{T}_{\opn{new}})$.
However, in this case, we use also vectors in the neighborhood of $z_{\opn{new},1}$.
We set $\mcl{Z}$ as a neighborhood of $z_{\opn{new}}$ and set the $\alg$-valued parameter $\theta$ as a function on $\mcl{Z}$.
%$z_{\opn{new},i}\in\mcl{Z}$ for $i=2,\ldots,l$ which are in the neighborhood of $z_{\opn{new},1}$.
For given samples $\mb{x}_1,\ldots,\mb{x}_n\in\mathbb{R}^{N_0}$ and its labels $\mb{y}_1,\ldots,\mb{y}_n\in\mathbb{R}^{N_{H+1}}$, we set the loss function $L:\mathbb{R}^N\to\mathbb{R}_+$ as the cross categorical entropy:
\begin{equation*}
 L(\theta)=-\sum_{i=1}^n\blacket{{y}_i,\log(f^{\theta}({x}_i))}.
\end{equation*}
By using the restriction of the function $\Theta$ on $\mcl{Z}$ as the initial value of the $\alg$-parameter $\theta$, we can learn the model for the new task $\mcl{T}_{\opn{new}}$ using the information about tasks distributed in the neighborhood of the new task in the space of the low-dimensional representations.
This makes learning the model more efficient even with a limited number of samples for the new task.

\begin{figure}[t]
\centering
 \subfigure[Task $1$]{\includegraphics[scale=0.27]{./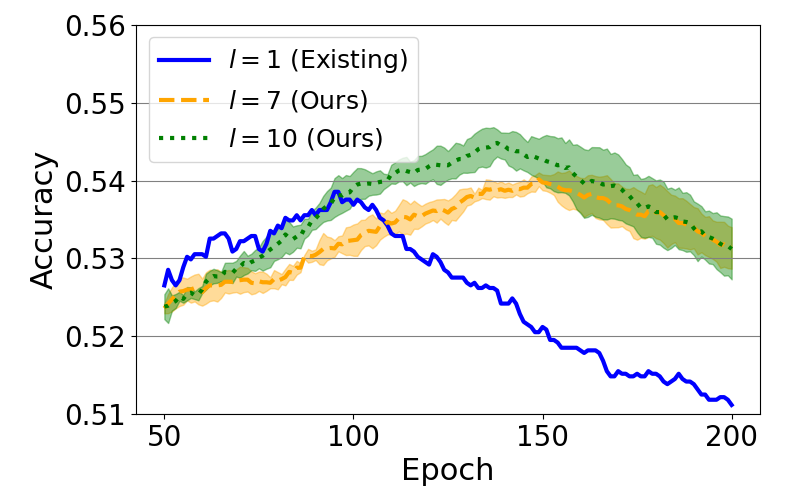}}\vspace{-.2cm}\\
\subfigure[Task $2$]{\includegraphics[scale=0.27]{./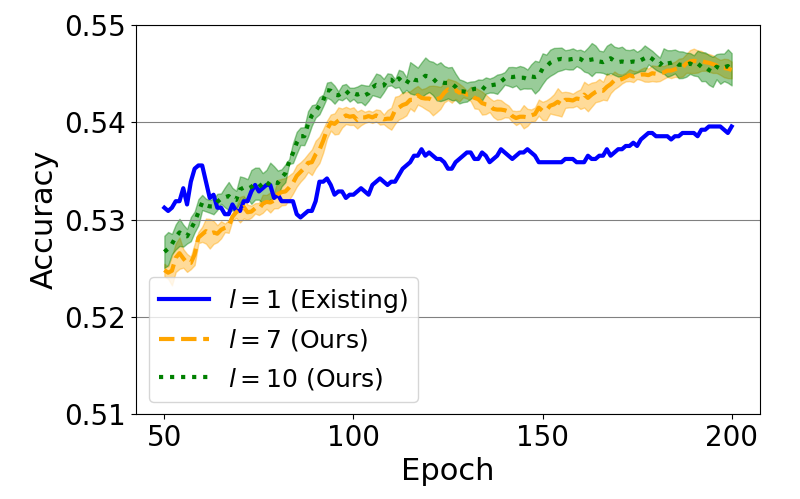}}\vspace{-.2cm}
\caption{{Test accuracies of the new tasks with different values of $l$. (Each value is an average ($\pm$ a standard deviation) over 5 independent runs with different values of $z_{j,i}$ ($j=1,\ldots,5$, $i=2,\ldots,l$). $\mu=0.05$ for Task 1 and $\mu=0.1$ for Task 2.)}}\label{fig:fs_accuracy}
%\vspace{-.3cm}
\end{figure}

\subsubsection{Numerical results}
We show the validity of applying our method to few-shot learning numerically.
We used the miniImageNet dataset~\citep{vinyals16}, which is composed of $100$ classes, each of which has $600$ images, and considered the $5$-way $1$-shot task in the same manner as Subsection 4.2 in~\citep{rusu19}.
We randomly split the dataset into the train data with $1$ image and the test data with $599$ images.
In this case, each task is the classification with respect to randomly selected $5$ classes with $1$ sample in each class.
We set $f^{\theta}$ as the one-layer softmax classifier as used by~\citet{rusu19}, and for the gradient descent method, we used Adam with learning rate $0.001$.
In this case, the number of parameters $N$ is $N_1N_0$, where $N_0=640$ and $N_1=5$.
We first learn the model proposed by~\citet{rusu19} and obtain maps $Z$ and $\Theta$ explained in Subsection~\ref{subsec:few_shot_form}.
In their model, the dimension of the low-dimensional representation is $320$ and for $j=1,\ldots,N_1$, elements $N_0(j-1)+1\sim N_0j$ of $\Theta$ only depends on elements $64(j-1)+1\sim 64j$ in the space of the low-dimensional representation.
Thus, we set $\mcl{Z}\subseteq\mathbb{R}^{320}$.
For a new task $\mcl{T}_{\opn{new}}$, we set the finite-dimensional subspace $\mcl{V}$ of $\alg^N$ as $\oplus_{j=1}^{N_1}\opn{Span}\{v_{j,1},\ldots,v_{j,l}\}^{N_0}$.
Here, $l\le 10$, $v_{j,i}(z)=e^{-10\Vert z_{j,i}-p_j(z)\Vert^2}$, $z_{j,1}=Z(\mcl{T}_{\opn{new}})_{64(j-1)+1:64j}$, and $z_{j,i}$ for $i=2,\ldots,l$ are randomly drawn from the normal distribution with mean $z_{j,1}$ and standard deviation $0.01$.
Moreover, $p_j$ is the projection that maps $z$ to $z_{64(j-1)+1:64j}$ and for a finite-dimensional vector $v$, $v_{i:j}$ denotes the $j-i+1$-dimensional vector composed of elements $i\sim j$ of $v$.
We set the map ${P}:\alg\to\mcl{V}$ as the kernel ridge regression in the same manner as Section~\ref{subsec:nf_numexp}.
%We set $\lambda=0.1$.

For randomly selected two new tasks, we compared the accuracy of the classification among different values of $l$.
We set $l=1,7,10$ for both tasks.
For $l>1$, the output (the probability to belong to each class) is a function on $\mcl{Z}$.
We computed the test accuracy of the output at $z_1$.
Fig.~\ref{fig:fs_accuracy} shows the results.
Note that $l=1$ corresponds to the standard few-shot learning (with $\mathbb{R}$-valued parameters) explained in~\ref{subsec:few_shot_form}.
We can get higher accuracy as $l$ becomes larger.
Fig.~\ref{fig:fs_param} shows the accuracy with different values of the hyper parameter $\eta$ of the kernel ridge regression $P$.
In this experiment, we fixed another hyperparameter $\lambda$ as $0$.
{If $\mu=0$, then our method is equivalent to the existing method. Thus, if $\mu$ is too small, the accuracy is not so high.
On the other hand, if $\mu$ is large, then the mean squared error of the kernel ridge regression becomes large.
Thus, if $\mu$ is too large, the accuracy is not so high.}
%The best value of $\eta$ is around $0.1$ for both tasks. 

\begin{figure}[t]
\centering
 \subfigure[Task $1$]{\includegraphics[scale=0.27]{./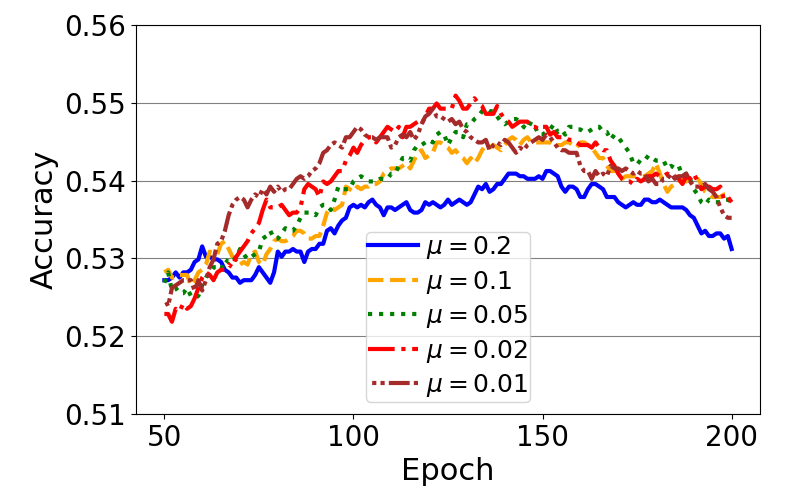}}\vspace{-.2cm}\\
\subfigure[Task $2$]{\includegraphics[scale=0.27]{./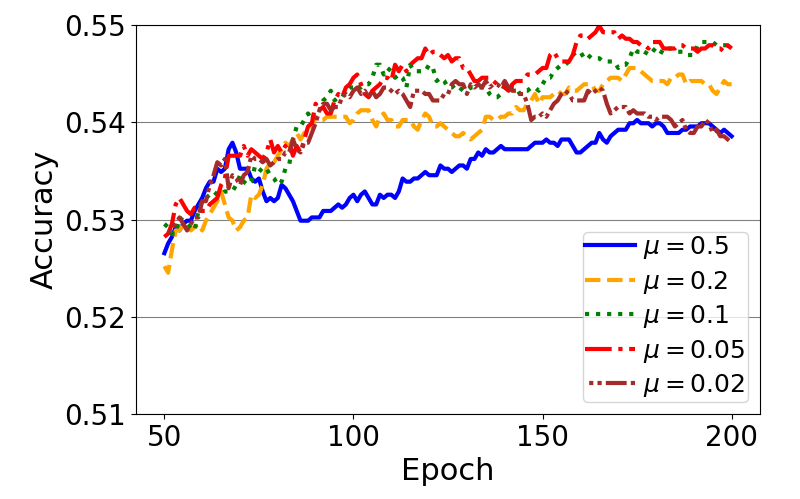}}\vspace{-.2cm}
\caption{{Test accuracies of the new tasks with different values of $\mu$. ($l=10$ for both tasks)}}\label{fig:fs_param}
%\vspace{-.4cm}
\end{figure}

\subsection{Other applications}\label{subsec:other_appl}
Although in Subsections~\ref{subsec:density_estimation} and \ref{subsec:few-shot} we focus on density estimation and few-shot learning, we can apply our framework to other applications.
We list examples of other applications and discuss connections with existing methods below. 

\paragraph{Ensemble learning}
Ensemble learning combines multiple models to obtain better generalization performance~\cite{dong20,ganaie21}.
Our framework allows us to combine the models continuously.
Indeed, the case of $\tilde{\lambda}=\eta=0$ and $\mcl{D}=\sum_{i=1}^l\delta_{z_i}/l$ in Subsection~\ref{subsec:nf_numexp} is equivalent to the existing framework of ensemble learning.
In general, let $\mcl{Z}=\{z_1,\ldots,z_m\}$ be a finite discrete set and let $P$ and $\tilde{\lambda}$ in Eq.~\eqref{eq:gd_scheme_mod} be the identity map and $0$, respectively.
In addition, let the loss function $L$ be defined as $L(\theta)(z)=\tilde{L}(\theta(z),z)$. 
Then, our framework is reduced to the ensemble learning because for each $i=1,\ldots, m$, $f^{\theta}(\cdot)(z_i)$ in Eq.~\eqref{eq:nn_cstar} is the classical model with $\mathbb{R}$-valued parameters, and the learning process is independent of that for $j\neq i$. 
By setting $\mcl{Z}$ as an infinite set and $P$ as a map different from the identity map, we can learn multiple models more efficiently.

%\paragraph{Meta-learning and multitask learning}

\paragraph{Generating time-series or spatial data}
We set $\mcl{Z}\subseteq\mathbb{R}$ for time-series data and we set $\mcl{Z}\subseteq\mathbb{R}^2$ or $\mcl{Z}\subseteq\mathbb{R}^3$ for spatial data.
Then, we set $f=f^{\theta}$ in Eq.~\eqref{eq:nn_cstar} so that $f^{\theta}(\cdot)(z)$ is a generative model such as GAN~\cite{goodfellow14,karras20}, VAE~\cite{kingma14,gregor15}, or normalizing flow~\cite{dinh14,dinh17,teshima20} for any $z\in\mcl{Z}$.
Since the outputs of the model are functions on $\mcl{Z}$ in our framework, we can generate time-series or spatial data as a function on $\mcl{Z}$ rather than a discrete series.

\paragraph{Learning distributions of parameters}
Distributions of parameters of models have been studied~\cite{pennington18,sonoda21}.
\citet{franchi20} propose learning distributions of parameters of a model rather than their values.
They assume that the distributions are normal distributions.
Using our framework, we can consider more general distributions.
Let $\mcl{Z}$ be a probability space and we set the $\alg$-valued parameters $\theta\in\alg^N$ as random variables taking their values in $\mathbb{R}$. %-valued parameters of a classical model.
If we limit the distributions of $\theta$ to Dirac measures, then our framework is reduced to be a classical model with $\mathbb{R}$-valued parameters.
If we limit the distributions of $\theta$ to normal distributions, then our framework is reduced to be the framework proposed by~\citet{franchi20}.

%\color{red}
\paragraph{Generalizing complex-valued networks}
Using complex-valued variables and parameters of models for taking advantage of the arithmetic of complex numbers has been studied~\cite{bassey21,hirose92,amin08,nishikawa05,yadav05}.
Since $C^*$-algebra is a generalization of the space of complex numbers, our framework generalizes complex-valued networks.
In addition, the $C^*$-algebra $C(\mathcal{Z})$ is the space of {complex-valued} continuous functions on $\mathcal{Z}$.
Thus, by using our framework, we can aggregate multiple complex-valued networks in the same way as real-valued networks.
\color{black}

\section{Conclusion}\label{sec:conclusion}
In this paper, we proposed a new general framework of neural networks on $C^*$-algebra.
We focused on the $C^*$-algebra of the space of continuous functions on a compact space and provided a gradient descent method for learning the model on $C^*$-algebra.
By generalizing the parameters of a model to functions, we can use tools for functions such as regression and integrations, 
which enables us to learn features of data efficiently and adapt the models to problems continuously. 
We applied our framework to density estimation and few-shot learning and showed the validity of our framework.
%Its application is not limited to the above cases.
Our framework is valid for a wide range of practical applications and not limited to the above cases.

% Acknowledgements should only appear in the accepted version.
%\section*{Acknowledgements}

%\textbf{Do not} include acknowledgements in the initial version of
%the paper submitted for blind review.

%If a paper is accepted, the final camera-ready version can (and
%probably should) include acknowledgements. In this case, please
%place such acknowledgements in an unnumbered section at the
%end of the paper. Typically, this will include thanks to reviewers
%who gave useful comments, to colleagues who contributed to the ideas,
%and to funding agencies and corporate sponsors that provided financial
%support.

% In the unusual situation where you want a paper to appear in the
% references without citing it in the main text, use \nocite
%\nocite{langley00}

\bibliography{example_paper}
\bibliographystyle{icml2022}

%%%%%%%%%%%%%%%%%%%%%%%%%%%%%%%%%%%%%%%%%%%%%%%%%%%%%%%%%%%%%%%%%%%%%%%%%%%%%%%
%%%%%%%%%%%%%%%%%%%%%%%%%%%%%%%%%%%%%%%%%%%%%%%%%%%%%%%%%%%%%%%%%%%%%%%%%%%%%%%
% APPENDIX
%%%%%%%%%%%%%%%%%%%%%%%%%%%%%%%%%%%%%%%%%%%%%%%%%%%%%%%%%%%%%%%%%%%%%%%%%%%%%%%
%%%%%%%%%%%%%%%%%%%%%%%%%%%%%%%%%%%%%%%%%%%%%%%%%%%%%%%%%%%%%%%%%%%%%%%%%%%%%%%

\newpage
\appendix
\onecolumn
\section*{Appendix}
%\section{You \emph{can} have an appendix here.}
\section{Definitions and examples related to Section~\ref{sec:background}}\label{ap:definition}
We provide the standard definitions and examples related to Section~\ref{sec:background}.
\begin{definition}[Algebra]\label{def:algebra}
A set $\alg$ is called an {\em algebra} on a field $\mathbb{F}$ if it is a vector space equipped with an operation $\cdot:\alg\times\alg\to\alg$ that satisfies the following conditions for $b,c,d\in\alg$ and $\alpha\in\mathbb{F}$:\vspace{.2cm}

% \leftskip=10pt
 $\bullet$ $(b+c)\cdot d={b}\cdot d+c\cdot d$,\quad
 $\bullet$ $b\cdot(c+d)=b\cdot c+b\cdot d$,\qquad
 $\bullet$ $(\alpha c)\cdot d=\alpha(c\cdot d)=c\cdot(\alpha d)$.\vspace{.2cm}\\
\leftskip=0pt
The symbol $\cdot$ is omitted when doing so does not cause confusion.
\end{definition}
\begin{definition}[Multiplication]\label{def:multiplication}
Let $\modu$ be an abelian group with operation $+$ and let $\alg$ be a ring.
For $c,d\in\alg$ and $u,v\in\modu$, if an operation $\cdot:\modu\times\alg\to\modu$ satisfies

%\begin{enumerate}
$\bullet$ $(u+v)\cdot c=u\cdot c+v\cdot c$,\qquad
$\bullet$ $u\cdot (c+d)=u\cdot c+u\cdot d$,\qquad
$\bullet$ $u\cdot (cd)=(u\cdot {c})\cdot {d}$,\qquad
$\bullet$ $u\cdot 1_{\alg}=u$ \textcolor{black}{if $\alg$ is unital},\vspace{.2cm}\\
%\end{enumerate}
%where $1_{\alg}$ is the multiplicative identity of $\alg$,
then $\cdot$ is called an {\em $\alg$-multiplication}.
The symbol $\cdot$ is omitted when doing so does not cause confusion.
\end{definition}

\begin{example}\label{ex:An}
A simple example of Hilbert $C^*$ modules over a $C^*$-algebra $\alg$ is $\alg^N$ for $N\in\mathbb{N}$.
The $\alg$-valued inner product of $\mathbf{c}=[c_1,\ldots,c_n]^T$ and $\mathbf{d}=[d_1,\ldots,d_n]^T$ is defined as $\blacket{\mathbf{c},\mathbf{d}}=\sum_{i=1}^nc_i^*d_i$.
The norm in $\alg^N$ is given as $\Vert \mathbf{c}\Vert=\Vert \sum_{i=1}^nc_i^*c_i\Vert^{1/2}$.
\end{example}

\section{Proof of Fact~\ref{fact:uniform}}
\setcounter{section}{3}
\renewcommand{\thesection}{\arabic{section}}
\setcounter{theorem}{3}
\begin{fact}
Let $\mcl{Z}$ be a compact metric space and let $d_{\mcl{Z}}$ be the metric on $\mcl{Z}$.
Let $\zeta_0,\zeta_1,\ldots\in\alg$ be a sequence such that there exists a function $\zeta^*$ on $\mcl{Z}$, $\lim_{t\to\infty}\zeta_t(z)=\zeta^*(z)$ for any $z\in\mcl{Z}$.
If $\zeta_0,\zeta_1,\ldots$ is uniformly Lipschitz continuous, that is, there exists $C>0$ such that for any $t\in\mathbb{N}$, 
\begin{equation*}
\vert \zeta_t(z_1)-\zeta_t(z_2)\vert\le Cd_{\mcl{Z}}(z_1,z_2),
\end{equation*}
then $\zeta_0,\zeta_1,\ldots$ converges uniformly to $\zeta^*$ and $\zeta^*\in\alg$.
\end{fact}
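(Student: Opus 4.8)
The statement is a classical Arzel\`a--Ascoli-type argument, so the plan is to establish uniform convergence first and then conclude that the limit lies in $\alg = C(\mcl{Z})$. The key observation is that a uniformly Lipschitz sequence is \emph{equicontinuous}: for every $\epsilon > 0$ we may take $\delta = \epsilon/C$, and then $d_{\mcl{Z}}(z_1,z_2) < \delta$ forces $\vert\zeta_t(z_1) - \zeta_t(z_2)\vert < \epsilon$ for \emph{all} $t$ simultaneously. I would first use this to show the pointwise limit $\zeta^*$ is itself $C$-Lipschitz (hence continuous, hence in $\alg$): fix $z_1, z_2$, pass to the limit in $\vert\zeta_t(z_1) - \zeta_t(z_2)\vert \le C d_{\mcl{Z}}(z_1,z_2)$ using that both sides converge.

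\textbf{Main step: upgrading pointwise to uniform convergence.} Fix $\epsilon > 0$ and set $\delta = \epsilon/(3C)$. Since $\mcl{Z}$ is compact, cover it by finitely many balls $B(w_1,\delta),\ldots,B(w_m,\delta)$. For each center $w_j$, pointwise convergence gives $T_j$ such that $\vert\zeta_t(w_j) - \zeta^*(w_j)\vert < \epsilon/3$ for all $t \ge T_j$; put $T = \max_j T_j$. Now for arbitrary $z \in \mcl{Z}$, choose a center $w_j$ with $d_{\mcl{Z}}(z,w_j) < \delta$ and estimate, for $t \ge T$,
\begin{equation*}
\vert\zeta_t(z) - \zeta^*(z)\vert \le \vert\zeta_t(z) - \zeta_t(w_j)\vert + \vert\zeta_t(w_j) - \zeta^*(w_j)\vert + \vert\zeta^*(w_j) - \zeta^*(z)\vert < C\delta + \tfrac{\epsilon}{3} + C\delta = \epsilon,
\end{equation*}
where the first term is controlled by the uniform Lipschitz bound and the third by the Lipschitz bound on $\zeta^*$ just established. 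Since $T$ does not depend on $z$, this proves $\sup_{z\in\mcl{Z}}\vert\zeta_t(z) - \zeta^*(z)\vert \to 0$, i.e.\ uniform convergence.

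\textbf{Conclusion and main obstacle.} Having shown $\zeta^* \in C(\mcl{Z}) = \alg$ and that the convergence is uniform, the statement follows. The only real subtlety — the ``hard part'' — is the logical ordering: one needs the continuity (indeed Lipschitz-ness) of $\zeta^*$ to run the three-term estimate, so that piece must be extracted first from the pointwise limit of the Lipschitz inequalities; everything else is a routine compactness-plus-$\epsilon/3$ argument. An alternative route would be to show $(\zeta_t)$ is uniformly Cauchy in the sup norm directly (again via a finite $\delta$-net and equicontinuity) and invoke completeness of $\alg$ to get a uniform limit, then identify it with $\zeta^*$ by pointwise uniqueness of limits; I would mention this only as a remark since the direct argument above is equally short.
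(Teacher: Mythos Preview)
Your argument is correct. The paper, however, takes precisely the ``alternative route'' you relegate to a closing remark: it never touches $\zeta^*$ directly but instead bounds $\vert\zeta_s(z)-\zeta_t(z)\vert \le 2C\,d_{\mcl{Z}}(z,c)+\vert\zeta_s(c)-\zeta_t(c)\vert$, covers $\mcl{Z}$ by finitely many $\epsilon$-balls, and concludes that $(\zeta_t)$ is uniformly Cauchy; completeness of $C(\mcl{Z})$ under the sup norm then gives both the uniform limit and its membership in $\alg$ in one stroke. Your version front-loads the work by first extracting the Lipschitz continuity of $\zeta^*$ from the pointwise limit of the Lipschitz inequalities, which lets you estimate $\vert\zeta_t(z)-\zeta^*(z)\vert$ directly and yields $\zeta^*\in\alg$ without appealing to completeness. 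Both routes use the same compactness-plus-finite-net mechanism and are of essentially the same length; the paper's Cauchy approach is marginally cleaner in that it avoids the preliminary step of proving regularity of $\zeta^*$, while yours has the minor expository advantage of making the Lipschitz constant of the limit explicit.
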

\begin{proof}
 For $z,c\in\mcl{Z}$ and $s,t\in\mathbb{N}$, we have
\begin{align}
 \vert \zeta_s(z)-\zeta_t(z)\vert\le \vert \zeta_s(z)-\zeta_s(c)\vert + \vert \zeta_s(c)-\zeta_t(c)\vert + \vert \zeta_t(c)-\zeta_t(z)\vert
\le 2Cd_{\mcl{Z}}(z,c)+\vert \zeta_s(c)-\zeta_t(c)\vert,\label{eq:dist}
\end{align}
where the last inequality holds since $\zeta_0,\zeta_1,\ldots$ is uniformly Lipschitz continuous.
Let $\epsilon>0$.
Since $\zeta_t(c)$ converges to $\zeta^*(c)$, there exists $T_c>0$ such that for any $s,t\ge T_c$, $\vert \zeta_s(c)-\zeta_t(c)\vert\le\epsilon$ holds.
Let $\mcl{Z}_{c}=\{z\in\mcl{Z}\,\mid\,d_{\mcl{Z}}(z,c)<\epsilon\}$.
Then, we have $\mcl{Z}=\bigcup_{c\in\mcl{Z}}\mcl{Z}_c$.
Since $\mcl{Z}$ is compact, there exists $n\in\mathbb{N}$ and $c_1,\ldots,c_n\in\mcl{Z}$ such that $\mcl{Z}=\bigcup_{i=1}^n\mcl{Z}_{c_i}$.
For $z\in\mcl{Z}_{c_i}$, by Eq.~\eqref{eq:dist}, we have
\begin{align*}
 \vert \zeta_s(z)-\zeta_t(z)\vert\le 2C\epsilon+\epsilon
\end{align*}
for $s,t\ge T_{c_i}$.
Thus, for any $s,t\ge\max_{i\in\{1,\ldots,n\}}T_{c_i}$ and $z\in\mcl{Z}$, we have $\vert \zeta_s(z)-\zeta_t(z)\vert\le (2C+1)\epsilon$, which implies the uniform convergence of $\zeta_t$.
Since $\alg=C(\mcl{Z})$ is the Banach space equipped with the sup norm, $\zeta^*\in\alg$. 
\end{proof}

\begin{figure}[b]
\centering
 \subfigure[$\mathbb{R}$-valued model]{\includegraphics[scale=0.27]{./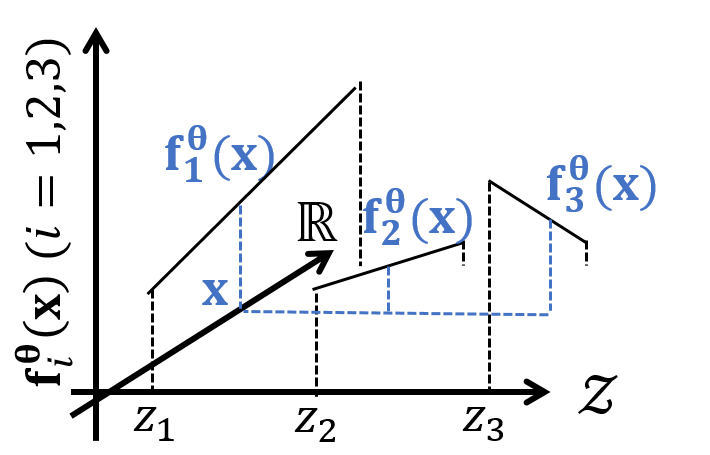}}\qquad
\subfigure[$C^*$-algebra-valued model]{\includegraphics[scale=0.27]{./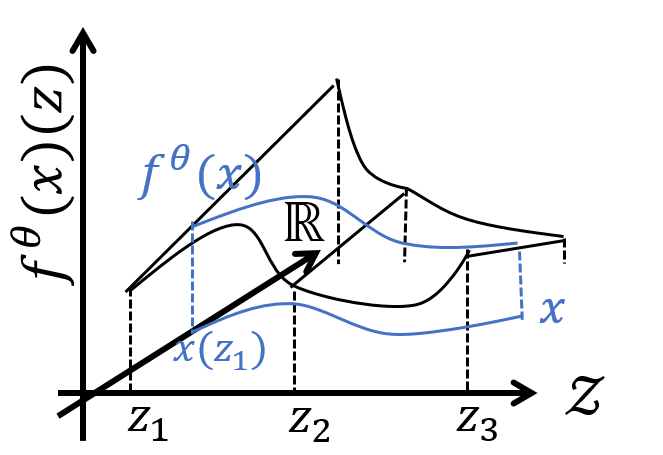}}
\caption{{Difference between $\mathbb{R}$-valued and $C^*$-algebra-valued models.}}\label{fig:linear_reg}
\end{figure}

\setcounter{section}{2}
\renewcommand{\thesection}{\Alph{section}}
\section{Difference between $\mathbb{R}$-valued and $C^*$-algebra-valued models}\label{ap:diff}
We illustrate the difference between $\mathbb{R}$-valued and $C^*$-algebra-valued models for a specific case of the linear regression problem on $\mathbb{R}$.
Let $z_1,z_2,z_3\in\mcl{Z}$ be variables that characterize models (e.g., the index of tasks and the class of data).
For $\mathbb{R}$-valued models, we separately learn three models and get $\mb{f}^{\bs{\theta}}_i(\mb{x})=\bs{\theta}_{i,1}\mb{x}+\bs{\theta}_{i,2}$ for $i=1,2,3$ and $\mb{x}\in\mathbb{R}$.
Here, $\bs{\theta}_{i,j}\in\mathbb{R}$ is an $\mathbb{R}$-valued parameter.
On the other hand, for $C^*$-algebra-valued models, we aggregate three models continuously and represents them as a function.
Then, we learn the model and get $f^{\theta}(x)(z)=\theta_{i,1}(z)x(z)+\theta_{i,2}(z)$ for $i=1,2,3$ and $x\in\alg$, which corresponds to learn the models simultaneously with interactions.

\if0
You can have as much text here as you want. The main body must be at most $8$ pages long.
For the final version, one more page can be added.
If you want, you can use an appendix like this one, even using the one-column format.
\fi
%%%%%%%%%%%%%%%%%%%%%%%%%%%%%%%%%%%%%%%%%%%%%%%%%%%%%%%%%%%%%%%%%%%%%%%%%%%%%%%
%%%%%%%%%%%%%%%%%%%%%%%%%%%%%%%%%%%%%%%%%%%%%%%%%%%%%%%%%%%%%%%%%%%%%%%%%%%%%%%

\end{document}